\documentclass{article}
\usepackage{ijcai26}

\usepackage{xspace}
\usepackage{amsmath}
\allowdisplaybreaks
\usepackage{amssymb}
\usepackage{amsthm}
\usepackage{xcolor}

\usepackage[bb=dsserif]{mathalpha}

\newcommand{\buchi}{B\"uchi\xspace}

\newcommand{\mdp}{\mathcal{M}}
\newcommand{\state}{\mathcal{S}}
\newcommand{\action}{\mathcal{A}}
\newcommand{\kernel}{\mathcal{P}}
\newcommand{\initialstate}{\ensuremath{\mu}}
\newcommand{\reward}{r}
\newcommand{\discount}{\ensuremath{\gamma}}
\newcommand{\E}{\mathbb{E}}
\newcommand{\given}{\,|\,}

\newcommand{\trajectory}{\tau}              
\newcommand{\Vpi}{V^{\pi}}                  
  
\newcommand{\mdplabel}{\mathcal{L}}

\newcommand{\ltl}{\text{LTL}\xspace}
\newcommand{\ltlU}{\textsf{U}}
\newcommand{\ltlX}{\textsf{X}}

\newcommand{\ltlG}{\textsf{G}}

\newcommand{\ltlF}{\textsf{F}}

\newcommand{\ap}{\textsf{AP}}
\newcommand{\alphabet}{\Sigma}
\newcommand{\ttrue}{{\mathit{tt}}}
\newcommand{\ffalse}{\mathit{ff}}
\newcommand{\flang}[1]{[#1]}

\renewcommand{\ltlX}{\operatorname{\mathbf{X}}}
\renewcommand{\ltlF}{\operatorname{\mathbf{F}}}
\renewcommand{\ltlG}{\operatorname{\mathbf{G}}}
\renewcommand{\ltlU}{\mathbin{\mathbf{U}}}

\newcommand{\ltlGF}{\operatorname{\mathbf{GF}}}
\newcommand{\ltlFG}{\operatorname{\mathbf{FG}}}

\newcommand{\ldba}{\mathcal{B}}

\newcommand{\autAcc}{F}
\newcommand{\autTrans}{\delta}
\newcommand{\autState}{Q}

\newcommand{\trueness}{\textit{tr}}
\newcommand{\obset}{\textit{ob}}
\newcommand{\prog}{\textit{prog}}

\usepackage{tikz}
\usetikzlibrary{automata,shapes,calc,fit,arrows,arrows.meta,backgrounds,colorbrewer}
\usepackage{todonotes}

\usepackage{multirow}
\newcommand{\timeout}{\textbf{\textcolor{gray}{TO}}}

\usepackage{times}
\usepackage{soul}
\usepackage{url}
\usepackage[hidelinks]{hyperref}
\usepackage[utf8]{inputenc}
\usepackage[small]{caption}
\usepackage{graphicx}
\usepackage{amsmath}
\usepackage{amsthm}
\usepackage{booktabs}
\usepackage{algorithm}
\usepackage{algorithmic}
\usepackage{comment}
\usepackage[switch]{lineno}

\usepackage[misc]{ifsym}

\newcommand{\daggerfootnote}[1]{%
  \let\oldthefootnote=\thefootnote
  \renewcommand{\thefootnote}{\dagger}
  \footnotetext{#1}
  \let\thefootnote=\oldthefootnote
}

\usepackage{subcaption}

\newtheorem{theorem}{Theorem}
\theoremstyle{remark}

\newtoggle{cameraready}
\toggletrue{cameraready}
\togglefalse{cameraready}

\title{Semantically Labelled Automata for Multi-Task Reinforcement Learning with LTL Instructions\iftoggle{cameraready}{\footnote{For full version and appendicies see \protect\cite{arxivVersion} }}{}}

\author{
Alessandro Abate$^1$ \and
Giuseppe De Giacomo$^1$\and
Mathias Jackermeier$^{1,\text{*}}$\and \\
Jan Kret{\'{\i}}nsk{\'{y}}$^{2,3}$\and
Maximilian Prokop$^{2,3,\text{*}}$\And
Christoph Weinhuber$^{1,\text{*}}$\\
\affiliations
$^1$ University of Oxford\\
$^2$ Technical University of Munich\\
$^3$ Masaryk University Brno\\
}

\begin{document}

\maketitle 

\begin{abstract}
    We study multi-task reinforcement learning (RL), a setting in which an agent learns a single, universal policy capable of generalising to arbitrary, possibly unseen tasks.
    We consider tasks specified as linear temporal logic (LTL) formulae, which are commonly used in formal methods to specify properties of systems, and have recently been successfully adopted in RL\@.
    In this setting, we present a novel task embedding technique leveraging a new generation of \textit{semantic} LTL-to-automata translations, originally developed for temporal synthesis.
    The resulting \textit{semantically labelled} automata contain rich, structured information in each state that allow us to (i) compute the automaton efficiently on-the-fly, (ii) extract expressive task embeddings used to condition the policy, and (iii) naturally support full LTL\@.
    Experimental results in a variety of domains demonstrate that our approach achieves state-of-the-art performance and is able to scale to complex specifications where existing methods fail.
\end{abstract}

\section{Introduction}
Developing reinforcement learning (RL) agents capable of executing complex, temporally extended tasks, such as those expressed in Linear Temporal Logic (LTL)~\cite{DBLP:conf/focs/Pnueli77}, is an active area of research in artificial intelligence. 
A long line of existing research investigates training RL agents to execute a single LTL instruction~\cite{DBLP:conf/cdc/SadighKCSS14,DBLP:conf/atva/BrazdilCCFKKPU14,DBLP:conf/tacas/HahnPSSTW19,DBLP:conf/qest/HasanbeigKA22,DBLP:conf/icml/Voloshin0Y23}.
Traditionally, the inherent memory requirement of LTL specifications is addressed by constructing an automaton (typically a variant of a \buchi automaton) corresponding to the LTL specification.
However, this couples the resulting policy to that specific LTL formula.
Consequently, even minor changes in the objective, e.g.\ altering the items that a warehouse robot needs to collect, requires retraining the agent.
Thus, single-task approaches are ineffective at agent deployment time, where retraining is infeasible, or when task specifications change frequently.

As a result, recent years have seen increased interest in the \textit{multi-task} RL setting \cite{DBLP:journals/corr/OhSLK17}.
Generally speaking, multi-task RL focuses on training a single, generalist policy capable of executing arbitrary tasks that may not be known apriori.
Intuitively, such a policy learns a set of reusable behaviours and how to compose them in order to achieve new tasks.
This is particularly challenging for LTL tasks, as the agent must additionally manage the required memory.

Recent approaches to tackle multi-task RL with LTL specifications can be broadly categorised into \emph{decomposition}-based methods~\cite{DBLP:conf/iclr/LeonSB22,DBLP:conf/nips/0002M023,DBLP:journals/corr/abs-2508-01561}, which break down the objective into several subtasks, and \emph{holistic} approaches that consider the full specification~\cite{DBLP:conf/icml/VaezipoorLIM21,DBLP:journals/ml/XuF24,DBLP:conf/iclr/JackermeierA25}.
While decomposition into subtasks facilitates generalisation by allowing the agent to recombine learned subtask policies, it may lead to suboptimal solutions due to missing context.
For instance, given the task ``approach shelf, then pick up hammer", a warehouse robot focusing only on the first objective is unaware of which shelf to approach.
To avoid these kinds of problems, holistic approaches seek policies that consider the full context of the task.
This, however, is significantly more challenging, as the generalisation must be achieved solely through learning, which necessitates an amenable task embedding. 
In particular, arbitrary tasks must be embedded in a finite-dimensional feature space that preserves the semantic similarity of tasks, while also effectively encoding the necessary memory.

\textbf{Our contribution}.
In this paper, we propose novel task embeddings based on a new generation of LTL-to-automata translations~\cite{DBLP:conf/cav/EsparzaK14,DBLP:journals/jacm/EsparzaKS20}.
Unlike traditional translations, these approaches are \textit{semantic}:
each automaton state is annotated with additional information (so-called \textit{semantic labelling}) that intuitively describes the semantic meaning of that state.
Recent work already exploited semantic labelling successfully in challenging problems from formal methods such as parity game solving \cite{DBLP:conf/atva/KretinskyMM19,DBLP:conf/cav/KretinskyMPR23} and reactive synthesis \cite{DBLP:conf/tacas/KretinskyMPZ25}.
In this work, we demonstrate that semantic labelling enables an effective embedding of arbitrary LTL tasks, while naturally addressing their memory requirements.
In contrast to previous methods, our approach is both (i) applicable to full LTL 
and (ii) lightweight, as it can be computed efficiently on the fly.
Finally, we design a policy around these embeddings and demonstrate that this enables successful generalisation to specifications of previously infeasible complexity.

\section{Preliminaries}
\paragraph{Linear temporal logic.} 
Let $\ap$ be a finite set of atomic propositions.
LTL formulas are generated by $\varphi ::= \ttrue\mid a  \mid \neg \varphi \mid \varphi \land \varphi \mid \ltlX\varphi \mid \varphi \ltlU \varphi$.
where $a\in \ap$, and $\ltlX$ (``Next"), $\ltlU$ (``Until") are temporal operators.
Further, we define the common abbreviations , $\ltlF\varphi\equiv\ttrue\ltlU \varphi$ (``Finally/Eventually") and $\ltlG \varphi \equiv \neg\ltlF\neg\varphi$ (``Globally/Always") \cite{DBLP:conf/focs/Pnueli77}.

An $\omega$-\emph{word} $w$ is an infinite sequence of letters $w[0]\,w[1]\,w[2]\dots$ with $w[i] \in \alphabet = 2^{\ap}$.
We denote the infinite suffix $w[i]\,w[i+1]\dots$ by $w_i$ and present the full satisfaction relation $w\models\varphi$ in Appendix \ref{appendix:ltl_sat_relation}.
The \emph{language} of $\varphi$, is the set of all satisfying words, denoted by $\flang{\varphi}$.
\ltl has several fragments, most prominently \emph{Safety} and \emph{Guarantee}.
Intuitively, they denote that something bad never happens and that something good happens eventually, respectively.
Further, we consider \emph{recurrence} (a guarantee holds infinitely often) and \emph{persistence} (a safety formula holds eventually).

\paragraph{Limit-deterministic \buchi automata.}
We define a limit-deterministic \buchi automaton (LDBA) as a tuple $\ldba = (\autState,\alphabet, \autTrans, \mathcal{E} , q_0, \autAcc)$, where $\autState$ is a finite set of states, $\alphabet := 2^{\ap}$ is an alphabet, $\autTrans: \autState\times\alphabet \to \autState$ is a transition function, $\mathcal{E}: Q \to 2^{Q}$ are (non-deterministic) epsilon transitions, $q_0 \in Q$ the initial state, and $\autAcc \subseteq Q$ is a set of accepting states, satisfying the following.
There exists a partitioning $Q= Q_I\uplus Q_A$ such that (i) the transition function never changes the component, (ii) epsilon transitions only exist from $Q_I$ to $Q_A$, (iii) every accepting state is in $Q_A$, and (iv) the initial state is in $Q_I$.

A run on an $\omega$-word $\omega = \sigma_0 \sigma_1 \cdots$ is an infinite sequence of states $q_0 q_1 \cdots$ that begins in the initial state and evolves according to the transition function 
except for at most one index where it may take an $\epsilon$-transition to $Q_A$ and remain in $Q_A$ thereafter.
Let $j$ be the index of the epsilon transition.
Then, formally, for all $i\geq 0$ with $i\not= j$, $q_{i+1} = \autTrans(q_i, \sigma_i)$ and $q_j \in Q_I, q_{j+1} \in \mathcal{E}(q_j)$.
A run is accepting if it visits $\autAcc$ infinitely often and a word is accepted by the LDBA if there exists at least one accepting run.
For any LTL formula $\varphi$ there exists an LDBA that accepts $[\varphi]$, e.g.~\cite{DBLP:conf/cav/SickertEJK16}.

\paragraph{Markov decision processes.} \label{para:mdp}
We define a Markov decision process (MDP) as a tuple
$\mdp = (\state, \action, \kernel, \initialstate, \reward, \discount)$,
where $\state$ is the state space, $\action$ the action space,
$\kernel: \state \times \action \to \Delta(\state)$ the (unknown) transition kernel, 
$\initialstate \in \Delta(\state)$ the initial state distribution, $\reward: \state \times \action \times \state \to \mathbb{R}$ the reward function, and $\discount \in [0,1)$ the discount factor. 
Here $\Delta(X)$ denotes the set of probability distributions over set $X$ and we write $x \sim \mu$ to denote that $x$ is sampled from a distribution $\mu$.
A (memoryless) policy is a map
$\pi:\state\to\Delta(\action)$, from state $s$ to a distribution over actions.
Rolling out $\pi$ in $\mdp$ yields a trajectory $\trajectory=(s_0,a_0,r_0,s_1,a_1,r_1,\ldots)$ with $s_0\sim\initialstate$, $a_t\sim\pi(\cdot\given s_t)$, $s_{t+1}\sim\kernel(s_t,a_t)$, and $r_t=\reward(s_t,a_t,s_{t+1})$.
A policy conditioned on task $\varphi$ is denoted as $\pi\given\varphi$ or $\pi(\cdot\given s_t,\varphi)$ when applied to a concrete state $s_t$.
We interpret MDPs via a labelling map $\mdplabel: \state \to 2^\ap$.
Given a trajectory $\tau$, $\mdplabel$ induces an $\omega$-word $w = \mdplabel(s_0)\mdplabel(s_1)\cdots \in (2^{\ap})^\omega$, on which we evaluate \ltl\@.

\paragraph{Reinforcement learning.}
Following \cite{DBLP:books/lib/SuttonB2018}, the objective of reinforcement learning (RL) is to find a policy $\pi^\star$ that maximises the \emph{expected discounted return} $J(\pi)\;=\;\E_{\trajectory\sim\pi}\!\left[\sum_{t=0}^{\infty}\discount^{t}\,r_t\right]$. We write $\tau\sim\pi$ to emphasise that the trajectory distribution depends on $\pi$. The \emph{state-value function} of a policy $\pi$ is $\Vpi(s) \;=\; \mathbb{E}_{\tau\sim\pi}\!\left[ \sum_{t=0}^{\infty} \gamma^{t} r_t \,\middle|\, s_0=s \right]$ and corresponds to the expected discounted return starting from state $s$ and following policy $\pi$.
In addition, we also consider a specification-driven objective, namely maximizing the probability of satisfying an \ltl task $\varphi$.
For a policy $\pi$, we define this satisfaction probability as $\Pr(\pi \models \varphi) = \mathbb{E}_{\tau \sim \pi}\bigl[\mathbb{1}[\tau\models\varphi]\bigr]$,
where $\mathbb{1}[\trajectory\models\varphi]$ is $1$ if the trace induced by $\trajectory$ satisfies $\varphi$, and $0$ otherwise.

\section{Problem Definition}
We study multi-task RL with LTL specifications, following \cite{DBLP:conf/icml/VaezipoorLIM21,DBLP:conf/iclr/JackermeierA25}.
In contrast to single-task RL, we consider \textit{task-conditioned} policies $\pi\given\varphi$ that receive a \textit{current} task $\varphi$ as input.
Such policies can adapt to a new task $\varphi$ without requiring any retraining.
In particular, the MDP is inacessible and solely $\varphi$ itself may be subject to any computation before and during the execution.
Formally, the goal is to learn a single universal policy that maximises the overall satisfaction probability over some distribution $\mathcal D$ over LTL tasks:

\begin{equation} \label{eq:multi-task-max}
    \pi^{\star}
    \in
    \arg\max_{\pi}
    \mathop{\mathbb{E}}\limits_{\substack{\varphi \sim \mathcal D, \\ \tau \sim \pi\mid\varphi}} \left[ \mathbb{1}[\tau \models \varphi] \right]
\end{equation}
The following two key challenges arise in this setting.

\paragraph{Memory requirements of LTL tasks.}
Notice that \ltl tasks are inherently \textit{non-Markovian}: 
the agent needs to be aware of already observed labels to decide its next action, as exemplified by our running example $\ltlF r \land \ltlFG y$.
In the same MDP state, the agent has to perform different actions depending on whether it has already seen $r$; if it has, it can focus on continuously achieving $y$; otherwise, it needs to satisfy the finite goal $\ltlF r$ first.
While it is in principle possible to learn a non-Markovian policy $\pi (\cdot\given s_0, \dots, s_t, \varphi)$ conditioned on the entire trajectory history, this introduces significant complexity in practice.
Hence most RL approaches with LTL tasks aim to recover a Markovian view by augmenting the MDP state with memory (typically the states of an automaton for $\varphi$) that keeps track of the current task's ``progress'' (intuitively, tracking the already achieved subgoals of $\varphi$).

\paragraph{Universal progress interface.}
In single-task RL, tracking progress of subgoals and conditioning a policy on it is straightforward.
The common approach is to construct an automaton for the given \ltl task, as by design, the automaton states form sufficient memory for deciding what to do next in the given task.
We then retrieve a Markovian setting by training a separate sub policy for each state.
At test time, the conditioning on task progress is achieved by tracking the current automaton state and employing the respective sub-policy.

However, in multi-task RL, conditioning via state-dependent sub-policies is no longer feasible, since the automaton is not fixed a priori, but changes with every new task.
Instead, a universal policy capable of being conditioned on \emph{any} possible task needs to be capable of processing \emph{any} possible automaton.
This necessitates a \textit{universal progress interface} (UPI) which is a finite-dimensional embedding of arbitrary tasks, that the policy can be conditioned on.
A UPI should (i) reduce any tasks to a common, learnable representation (ii) update that representation depending on achieved progress, and (iii) be computationally cheap, as at test time, it needs to be computed online during interaction with the environment.
To achieve the task-conditioned behaviour of the policy, the UPI needs to be semantically meaningful: if two different tasks require similar behaviour, they should have similar embeddings.
For instance, $\ltlF r \land \ltlFG y$ and $\ltlF r$ should have similar embeddings as both require an $r$ to proceed.

\section{Existing Approaches}
We proceed to discuss two current approaches to constructing a UPI which we build upon.

\subsection{Formula Progression Approach}
In \textsc{LTL2Action} \cite{DBLP:conf/icml/VaezipoorLIM21}, the \ltl formula itself is embedded as the universal progress interface.
Crucially, task progress can be updated by so-called \textit{formula progression} \cite{DBLP:conf/aaai/BacchusK96}.
Intuitively, for a formula $\varphi_t$ and a letter $\sigma$ the progressed formula $\prog(\sigma,\varphi_t)$ captures what remains to be satisfied after seeing $\sigma$.
For example, if $\varphi_t = \ltlF(r \land \ltlFG y)$ and $\sigma = \{r\}$, then $\prog(\sigma,\varphi_t) = \ltlFG y$ (see Appendix \ref{app:prog} for a full definition).

\textsc{LTL2Action} then embeds the current (progressed) formula via a graph neural network~(GNN) \cite{DBLP:journals/tnn/ScarselliGTHM09} $g$ that operates on LTL syntax trees.
The GNN is trained end-to-end with the task-conditioned policy $\pi(\cdot\given s_t, g(\varphi_t))$ via RL by issuing a reward of $1$ exactly when $\varphi_t$ progressed to $\ttrue$, which denotes satisfaction of the task.

A benefit of this approach is that formula progression can be computed efficiently and on the fly, i.e.\ each step is only computed when it is actually required.
However, its reliance on eventually progressing to $\ttrue$ to give rewards limits this approach to the co-safety (guarantee) fragment of \ltl.
Giving rewards for general \ltl tasks requires tracking additional information which plain formula progression does not provide.

\subsection{Automata-Theoretic Approach}
The common approach to support full LTL is to convert the current task to an automaton and keep track of the current automaton state as a UPI\@.
This is the approach taken by \textsc{DeepLTL}~\cite{DBLP:conf/iclr/JackermeierA25}, which employs LDBAs as suitable automata.
Formally, the task-conditioned policy $\pi\given\varphi$ then operates over the \textit{product MDP} $\mathcal M_\varphi$, whose state space consists of the MDP state space $\mathcal S$ augmented with the LDBA state space $\mathcal Q$.
Non-deterministic $\varepsilon$-transitions are handled by introducing special $\varepsilon$-actions $\varepsilon_{q'}$ that update the current LDBA state $q$ to $q' \in \mathcal E(q)$ without modifying the MDP state.
Assigning positive rewards of $1$ to visiting accepting states $F$ in the LDBA then enables learning a task-conditioned policy.
Formally, the policy is optimised via the following objective, where $\mathbb 1_F$ is the indicator function of $F$:
\begin{equation}
\label{eq:vanilla-automata-theory-single-rlk}
\pi^{\star} (\cdot\given s_t, q_t)
    \in
    \arg\max_{\pi} 
\mathop{\mathbb{E}}\limits_{\substack{\varphi \sim \mathcal D, \\ \tau \sim \pi\mid\varphi}}
\Bigl[
  \sum_{t'=0}^{\infty} \gamma^{t} \mathbb{1}_F(q_{t'})]
\Bigr].
\end{equation}

The key question in the automata-theoretic approach is how to obtain meaningful features of LDBA states from arbitrary formulae that are not known apriori.
\textsc{DeepLTL} proposes so-called \emph{reach-avoid sequences} as a common representation of automaton states.
Intuitively, these are sequences of letters to reach and avoid that would result in visiting accepting states infinitely often. 
They hence form a complete plan on how to produce a satisfying trace from the current state.
To chose between the many sequences that could represent a state, \textsc{DeepLTL} learns a critic network that judges the feasibility of any sequence in the given MDP\@.
Formally, \textsc{DeepLTL} learns a policy $\pi(\cdot\given s_t, e(\varsigma))$ conditioned on a reach-avoid sequence $\varsigma$ embedded via an embedding network $e$.
At test time, the agent tracks the current state of the LDBA and continually searches for the optimal (according to the critic) reach-avoid sequence $\varsigma^\star$ for representing the current state $q_t$, and executes actions given by $\pi(\cdot\given s_t, e(\varsigma^\star))$. 

While this approach works for full \ltl, the search for reach-avoid sequences is prohibitively expensive as (i) the entire LDBA, needs to be constructed upfront and (ii) all of its accepting paths need to be exhaustively enumerated to find the optimal reach-avoid sequence (see Appendix~\ref{app:complexity} for an exact derivation of the complexity).
This limits \textsc{DeepLTL} to relatively simple tasks, where computing and enumerating sequences of the entire automaton is feasible.

\paragraph{Rationale for LDBAs.} 
The choice of LDBA as automata is important as they combine several useful properties.
First, their \buchi acceptance can be converted naturally to a reward signal s.t.\ maximizing rewards coincides with optimising satisfaction probability \cite{DBLP:conf/tacas/HahnPSSTW19}.
Crucially, this is not possible for other acceptance conditions like Rabin, and impractical for parity \cite{impossibility_result}.
Note however, that this requires slightly more sophisticated objectives than the one we presented in Equation~\ref{eq:vanilla-automata-theory-single-rlk}, in order to address the bias introduced by the discount factor $\gamma$~\cite{DBLP:conf/tacas/HahnPSSTW19,DBLP:conf/icra/Bozkurt0ZP20,DBLP:conf/icml/Voloshin0Y23}.
Further, while some non-determinism is unavoidable when aiming for a \buchi automaton that captures full LTL, the non-determinism of LDBAs is very manageable.
Specifically, their non-determinism is \emph{good-for-MDP},
meaning that it can be resolved by the agent (i.e.\ the epsilon transitions turn into new actions) without sacrificing any optimality of the policy.\footnote{Technically, not every LDBA is good-for-MDPs, but the ones produced by 
\cite{DBLP:conf/cav/SickertEJK16,DBLP:journals/jacm/EsparzaKS20} are.}

\tikzstyle{box} = [draw,rectangle,rounded corners=5pt,minimum size=1cm]
\tikzstyle{autstate} = [draw,circle,inner sep=2pt,minimum size=.4cm]
\definecolor{trueness_color}{HTML}{CF3619}
\definecolor{attention_color}{HTML}{0D97A8}
\definecolor{currState_color}{HTML}{f78400}
\definecolor{autState_color}{HTML}{F2F099}
\definecolor{autState_color2}{HTML}{F58C78}
\definecolor{autState_color3}{HTML}{A9FC9D}
\definecolor{autState_color4}{HTML}{1CF4FF}
\definecolor{autState_color5}{HTML}{CE85FF}

\begin{figure*}
    \centering
    \begin{tikzpicture}[
        >={Stealth[round]}, 
        thick,
        font=\small,
        node distance=0.6cm, 
        box/.style={draw, rectangle, rounded corners=5pt, minimum size=1cm, fill=white},
        autstate/.style={draw, circle, inner sep=2pt, minimum size=.4cm, fill=white},
        section label/.style={font=\bfseries\footnotesize, color=gray!80, align=center},
        annotation/.style={font=\scriptsize\itshape, color=gray!90, align=left}
    ]

    \node (policy) at (11,0) {\Large $\pi(a \given s_t, e_\varphi)$};
    \node (boxExtensionNode) at (9.7,0) {};

    \node at (8,0.7) (colorsememb) {$
        \begin{bmatrix}
        \color{trueness_color} 0.25 \\
        \color{trueness_color} 0.0 \\
        \vdots \\
        \color{attention_color} 0.0 \\
        \color{attention_color} 1.0 \\
        \vdots \\
        \end{bmatrix}
    $};
    \node[section label] at (8,2.59) (lbl_emb) {Embedding $e_\varphi$};

    \node[annotation, align=left] at (8,-1.1) (legend) {
        \textcolor{trueness_color}{$\blacksquare$} Trueness \\
        \textcolor{attention_color}{$\blacksquare$} Attention
    };

    \node[box, inner sep=-1pt, fill=autState_color2!30] at (4.7,0.5) (semLab2) {
        \setlength{\tabcolsep}{3pt}
        \scriptsize
        \begin{tabular}{c|cc}
        \multirow{2}{*}{$q_1$} &
            ~M: & $\phantom{\ltlF r \land } ~~\ltlFG y~$ \\
          &  ~B: & $-$
        \end{tabular}
    };
        \node[above=0.1 of semLab2, box, inner sep=-1pt, fill=autState_color!30, draw=currState_color] (semLab1) {

        \setlength{\tabcolsep}{3pt}
        \scriptsize
        \begin{tabular}{c|cc}
        \multirow{2}{*}{$q_0$} &
           ~M: &  $\ltlF r \land \ltlFG y~$ \\
          &   ~B: & $-$
        \end{tabular}
    };    
    \node[below=0.1cm of semLab2, box, inner sep=-1pt, fill=autState_color3!30] (semLab3) {
        \setlength{\tabcolsep}{3pt}
        \scriptsize
        \begin{tabular}{c|cc}
        \multirow{2}{*}{$q_2$} &
            ~M: &  $\phantom{\ltlF r \land \ltlF}\ltlG y~$ \\
           &  ~B: & $-$
        \end{tabular}
    };

    \node[autstate, fill=autState_color2!50] (q1) at ($(semLab2.west) - (1.3, -0.6)$) {$q_1$};
    \node[autstate, initial, initial text=, fill=autState_color!50, draw=currState_color] (q0) at ($(q1) - (1.2, 0)$) {$q_0$};
    \node[autstate, accepting, fill=autState_color3!50] (q2) at ($(q1) + (0, -1.2)$) {$q_2$};
    \node[autstate] (q3) at ($(q2) - (1.2, 0)$) {$\bot$};

    \node[above=1.3cm of $(q0)!0.5!(q1)$, section label,align=center]  (lbl_aut) {LDBA};
    \node[above=0.2cm of semLab1, section label,align=center] (lbl_aut) {Semantic Labelling};

    \path[->] 
        (q0) edge[] node[above] {$r$} (q1)
        (q1) edge[] node[right] {$\varepsilon$} (q2)
        (q2) edge[] node[above] {$\neg y$} (q3)
        (q0) edge[loop, in=70, out=110,  looseness=5] node[left,near start] {$\neg r$} (q0)
        (q1) edge[loop, in=70, out=110,  looseness=5] node[right,near end] {$\ttrue$} (q1)
        (q2) edge[loop, in=250, out=290,  looseness=5] node[right,near start] {$y$} (q2)
        (q3) edge[loop, in=250, out=290,  looseness=5] node[left,near end] {$\ttrue$} (q3);
        
    \draw[->, dashed, gray!80, shorten >=2pt, shorten <=2pt] (q0.north east) to[bend left=30] node[midway, above, font=\scriptsize, color=gray!80] {} (semLab1.west);
    \draw[->, dashed, gray!80, shorten >=2pt, shorten <=2pt] (q1.south east) to[bend right=25] node[midway, above, font=\scriptsize, color=gray!80] {} (semLab2.west);
    \draw[->, dashed, gray!80, shorten >=2pt, shorten <=2pt] (q2.south east) to[bend right=15] node[midway, above, font=\scriptsize, color=gray!80] {} (semLab3.west);

    \node at (15,0.5) (envImg) {
        \includegraphics[width=2.6cm]{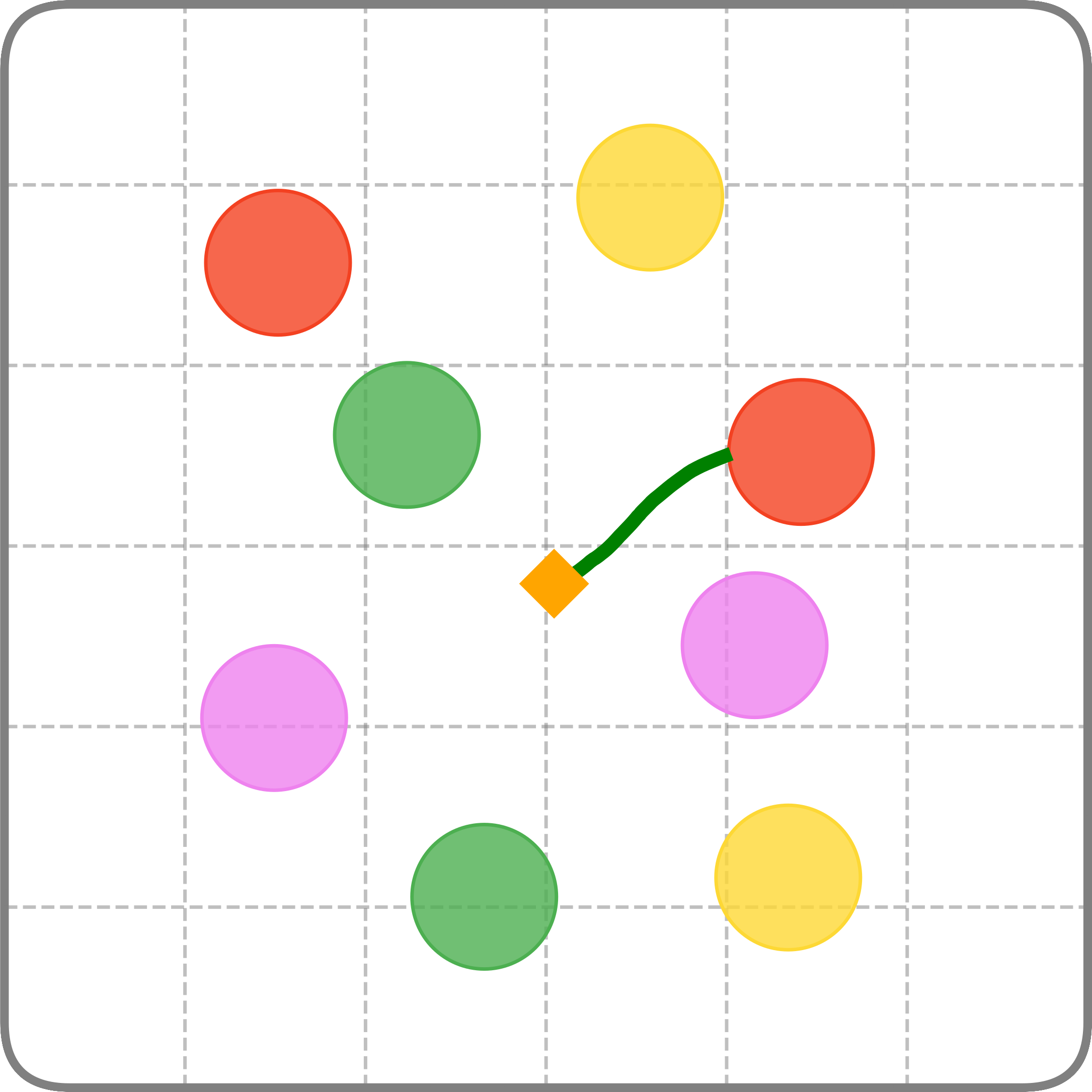}
    };
    \node[above=0.35cm of envImg, section label,align=center] (lbl_env) {Environment};

    \path[->, gray!50, line width=1.5pt] 
    (semLab1) edge [in=west,out=east] node[below=0.5cm,pos=0.7, black, font=\scriptsize] {EMBED} ($(colorsememb.west) + (0,-0.7)$);
    \draw[->, gray!50, line width=1.5pt] ($(colorsememb.east) + (0,-0.7)$) -- (policy.west);

    \draw[->, gray!50, line width=1.5pt] ($(envImg.west) + (0,-0.5)$) -- node[above,align=center, black] {state $s_t$}  (policy.east);

    \draw [->,gray!50, line width=1.5pt] 
    (envImg.south) -- +(0,-.62) -| node[above,pos=0.1,black] {MDP label $\{\textbf{r}\textit{ed}\}$} node[above left,pos=0.6,black, font=\scriptsize] {UPDATE} ($(q0)!0.5!(q1) + (0,-2)$);

    \draw [->,gray!50, line width=1.5pt] ($(policy.north)+(-0.22,0)$) |- node[above,pos=0.7,black] {action $a$}
    ($(envImg.west) + (0,0.5)$);

    \begin{scope}[on background layer]
        \node[fit=(q0) (q3) (lbl_aut) (semLab2) (colorsememb) (lbl_emb) (legend), rounded corners, fill=gray!5, draw=gray!20, dashed, inner sep=8pt, name=grey_box] {};
        
        \node[fit=(boxExtensionNode) (policy) (envImg) (envImg |- grey_box.north) (envImg |- grey_box.south) (policy |- grey_box.east), rounded corners, fill=blue!4, draw=blue!15, dashed, inner sep=0pt] {};
    \end{scope}

    \end{tikzpicture}
    \caption{Schematic overview of our approach for the ZoneEnv environment (a robot navigating a 2D plane with zones of different colours) and the LTL task $\varphi=\ltlF \textbf{r}\textit{ed} \land \ltlFG \textbf{y}\textit{ellow}$.
    We first convert the task to an LDBA with semantic labelling (see Section \ref{sec:semanticLabelling}).
    Note that initially only $q_0$ is constructed.
    Then the semantic labelling of the current state $q_0$ is embedded and passed to the policy (see Section \ref{sec:embedding}).
    Together with the current state of the MDP $s_t$, the policy yields action $a$ in the MDP.
    The robot's trajectory in the MDP (green line) enters a zone with MDP label $\textbf{r}\textit{ed}$ which is used to update the automaton.
    Only then, the corresponding successor $q_1$ is constructed in the automaton.
    }
    \label{fig:overview}

\end{figure*}

\subsection{Other Related Work} 
Several other works recently addressed the problem of multi-task RL with \ltl specifications.
Many of these approaches decompose the \ltl specification into sub-tasks, which are then completed one-by-one~\cite{DBLP:conf/iclr/LeonSB22,DBLP:conf/nips/0002M023,DBLP:conf/icra/LiuSRJ0T24,DBLP:journals/corr/abs-2508-01561}.
However, this generally still requires constructing the entire automaton upfront and furthermore, can lead to \textit{myopic} solutions due to missing context.
In contrast, we specifically design our approach to not suffer from such suboptimality.
\citeauthor{DBLP:journals/ml/XuF24}~[\citeyear{DBLP:journals/ml/XuF24}] propose a non-myopic approach based on the option framework~\cite{DBLP:journals/ai/SuttonPS99}, but this is limited to a small fragment of \ltl.
Recently, \citeauthor{ltlgnn}~[\citeyear{ltlgnn}]
investigated GNNs to improve the representation learning capabilities of \textsc{DeepLTL}, but their approach suffers from the same fundamental drawbacks as \textsc{DeepLTL}\@.

\section{Semantically Labelled Automata Approach}
We start by giving an overview of our approach.
We follow the automata-theoretic approach in the sense that we construct an LDBA for the current task, update its state to track progress and condition our policy on the current state.
The crux is to compute a meaningful embedding of any possible automaton state for any possible LTL formula (and ideally just by looking at that state alone, without considering the entire automaton).
Since there are infinitely many automaton states and we can only observe finitely many in training, the key question is how to embed the states into a structured feature space amenable to learning, i.e.\ where semantically similar states are close to each other.
Traditionally, states of (limit-)deterministic automata for LTL formulae bear no logical structure and are simply identified with a numeric index produced by the translation algorithm, so this may seem ambitious.
However, we employ more modern, \emph{semantic translations} to construct the LDBA, which are the product of a long line of research on constructing small automata \cite{DBLP:conf/cav/KretinskyE12,DBLP:conf/cav/EsparzaK14,DBLP:journals/jacm/EsparzaKS20}.
The key property of these translations is that each state is identified with a simple structure (e.g.\ a pair or a vector) of LTL formulae.
This so-called \emph{semantic labelling} fully describes the semantics of a state (that is, which subgoals remain to be achieved from here on) and thus, embedding this information can inform the policy about progress of the task and remaining obligations.
Crucially, the semantic labelling of a state is available without having to compute the entire automaton.
Instead, required parts of the automaton can be computed on-demand, depending on the observed MDP labels.
See Figure \ref{fig:overview} for an illustration of this approach.

\subsection{Semantic Labelling}\label{sec:semanticLabelling}
The semantic labelling of our automaton states originates from the translation presented in \cite{DBLP:journals/jacm/EsparzaKS20}, which is implemented in \textsc{Owl} \cite{Owl}.
Their LDBAs require keeping track of only two LTL formulae, which is achieved through upfront normalisation \cite{LTL_Normalization}.
First, we have the \textbf{M}\textit{ain formula}, which is the entire formula in the initial state and for every subsequent state is derived using formula progression.
Intuitively, the main formula can be thought of as the ``remaining language" from the current state onwards.
However, for a recurrence formula like $\ltlGF r$ (i.e.\ reach $r$ infinitely often), the remaining obligation will never change and thus, just tracking the main formula cannot distinguish between words that satisfy this property and ones that do not.
Thus, a second formula, the \textbf{B}\textit{reakpoint formula} is required to track the inner formula of recurring obligations ($\ltlF r$ in this case).
Whenever this formula progresses to $\ttrue$, it resets and emits a \buchi acceptance signal.
That way, seeing a \buchi signal infinitely often corresponds directly to satisfying the inner formula infinitely often.

As an example, consider the automaton of Figure \ref{fig:overview}.
The main formula of $q_0$ is the entire task and once we observe $r$, the automaton updates to state $q_1$ in which the main formula has progressed to $\ltlFG y$.
The $\varepsilon$-transition guesses when the safety formula $\ltlG y$ starts to hold and $q_2$ verifies that guess through formula progression.
This example has no breakpoint formulae due to the lack of recurrence formulae; see Appendix \ref{app:breakpoint} for an example with a breakpoint formula.

\subsection{Embedding the Semantic Labelling}\label{sec:embedding}
Embedding the semantic labelling requires capturing the relevant information of two LTL formulae, which we tackle individually and concatenate the result.
Recall that the embedding space needs to create numerical proximity of semantically similar states (or rather similar formulae) to enable generalisation.
Intuitively, the similarity of two formulae is proportional to how much their languages overlap.
Hence, our features intuitively approximate a formula's language by probing the effects of letters when seen at the current- or subsequent steps.
Additionally, we employ simple complexity measures of formulae (e.g.\ the height of the syntax tree as suggested in \cite{DBLP:conf/cav/KretinskyMPR23}) to allow the agent to choose less complex states when resolving $\varepsilon$-transitions.

\paragraph{Exploiting trueness.}
This feature approximates the immediate effect of observing a letter by measuring how much each letter progresses the formula towards or away from satisfaction.
We thus need a measure of proximity to satisfaction.
Intuitively, it is clear that $\ltlFG y$ is closer to satisfaction than $\ltlF r \land\ltlFG y$, since the former already achieved a subgoal.
To capture this, we resort to \emph{trueness} which was first introduced in \cite{DBLP:conf/atva/KretinskyMM19}.
Intuitively, trueness measures how easy it is to satisfy an \ltl formula, by propositional approximation of the temporal behaviour.
Formally, to compute the trueness $\trueness(\varphi)$ of a formula $\varphi$ we replace all temporal subformulae $\psi$ by new propositional variables $x_\psi$ and compute the ratio of satisfying assignments to total assignments in the resulting propositional formula.
For example for $\ltlF r \land \ltlFG y$, we replace all temporal subformulae, and get $x_{\ltlF r} \land x_{\ltlFG y}$.
In the resulting conjunction, 1 out of 4 assignments satisfy the formula and thus $\trueness(\ltlF r \land \ltlFG y) = 0.25$.
Further, we have $\trueness(\ltlFG y) = 0.5$, which indicates that the latter formula is easier to satisfy and progressing to the latter is desirable.

We use trueness to compute a feature value for every MDP label of $\mdplabel$.
As we are interested in the impact of a letter, we compare the change in trueness before and after seeing that letter, i.e.\ after progressing the formula using that letter.
Formally, for a formula $\varphi$ and a letter $\sigma$ the trueness feature $f_{tr}$ is defined as: 
\[
f_{tr}(\varphi,\sigma)= \trueness(\prog(\varphi,\sigma))-\trueness(\varphi).
\]

In the embedding vector of Figure \ref{fig:overview}, the first two red numbers correspond to the feature values $f_{tr}(\varphi,\{r\})$ and $f_{tr}(\varphi,\{y\})$, where $\varphi =\ltlF r \land \ltlFG y$.
By looking at these two numbers, the agent can infer that an $\{r\}$ would make progress while seeing $\{y\}$ currently has no effect.
Symmetrically to progress, this feature also captures violations.
E.g.\ in $q_2$ where the main formula is $\ltlG y$ we have the feature value $f_{tr}(\ltlG y,\{\})=-0.5$, indicating that not seeing a $y$ (terminally) hurts the progress towards satisfying $\ltlG y$.
We further refine this feature be employing different normalisations (e.g.\ among all letters, or among all positive/negative values) that result in slightly different semantic meanings.
For a full list and explanation of these, we refer to Appendix \ref{App:features}.
Ultimately, despite being unable to capture deep temporal relations (e.g.\ $\ltlF r \land \ltlG \neg r$ having trueness $0.25$ despite being unsatisfiable), trueness yields a great propositional approximation of task-progress, especially for formulae appearing in practice.

\paragraph{Propositional attention.}
Trueness alone yields a myopic embedding, meaning that it only considers the immediate effect of seeing a letter while disregarding what happens afterwards.
For example, the trueness feature values of $\ltlF r \land \ltlFG y$ alone, do not yield any indication that after seeing $\{r\}$, $\{y\}$ will be important, which can result in suboptimal behaviour in certain environments.
To combat this, we propose an attention-like feature, where we relate pairs of propositions by how reliant one is on the other for satisfying the formula.

Formally, for a formula $\varphi$ and a pair of propositions $(p, q)$,
we first compute the formula after seeing $\{p\}$ denoted as $\varphi'$.
Then, we compute the \emph{obligation sets} \cite{10.1109/TIME.2013.19} of $\varphi'$, denoted by $\obset(\varphi')$.
A single obligation set is a letter $\sigma\in2^{AP}$ that if seen infinitely often, satisfies the formula, i.e.\ $\sigma^\omega \models \varphi'$.
We provide the full definition of obligation sets in Appendix \ref{app:obset}, but intuitively, the obligation sets approximate the infinite behaviours of $\varphi'$ on a propositional level.
To compute the feature value, we then count how often $q$ appears in the obligation set (treating $q$ and $\neg q$ separately) and divide by the total number of obligation sets.
Putting it all together, the positive propositional attention feature $f_{att}^+$ for a formula $\varphi$ and a pair of propositions $(p, q)$ is given by
\[
f_{att}^+(\varphi,p,q) = \frac{|\{o\in \obset(\varphi')\mid q \in o \}|}{|\obset(\varphi')|},
\]
where $\varphi'=\prog(\varphi,\{p\})$ (and when $|\obset(\varphi')|=0$, the feature defaults to $0$).
Symmetrically, the negative propositional attention feature $f_{att}^-$ counts the appearances of $\neg q$.

In the embedding vector from Figure \ref{fig:overview}, the two teal numbers correspond to the feature values $f_{att}^+(\varphi,r,r)$ and $f_{att}^+(\varphi,r,y)$, where $\varphi =\ltlF r \land \ltlFG y$.
The first value being $0.0$ indicates that after seeing $\{r\}$ once, it is completely irrelevant.
Further, the second value being $1.0$ indicates that after seeing $\{r\}$, seeing letters containing $y$ is highly promising.
Crucially, this information is already present in the embedding of $q_0$, allowing the agent to consider its future obligation of $\ltlFG y$ while still primarily focussing on achieving $\ltlF r$.

\subsection{Policy Architecture}
Our approach is realised in a deep RL setting.
We next describe our policy architecture.
Our policies take the current MDP state $s$ and the semantic embedding of the current LDBA state $q$ as input, and produce an action distribution over the MDP action space $\mathcal A$ and possible $\varepsilon$-actions.
Recall that $\varepsilon$-actions $\varepsilon_{q'}$ (where $q'\in\mathcal E(q)$) correspond to following an $\varepsilon$-transition to $q'$ while remaining in the same MDP state.

To handle this complex hybrid action space (note in particular that $\mathcal A$ may be continuous, and $|\mathcal E|$ is not known apriori), we propose a multi-headed neural network architecture for the policy.
Given the current MDP state $s$ and LDBA state $q$, we first compute latent representations of all LDBA states $u$ in the $\varepsilon$-closure $\mathcal E(q)\cup\{q\}$ as follows: first, the MDP state $s$ is encoded via a \textit{state encoder}, which is either a multilayer perceptron (MLP) or a convolutional neural network (CNN), depending on the domain.
Then, the semantic embedding of each LDBA state $u\in\mathcal E(q)\cup\{q\}$ is processed via a learnable linear projection layer.
These feature vectors are concatenated and passed through a shared representation MLP for each LDBA state, producing latent embeddings $z_u \in \mathbb R^d$.

The latent embeddings are then fed into a linear \textit{scoring} head $\lambda \colon\mathbb R^d\to\mathbb R$ that assigns logits to $\varepsilon$-actions, and the embedding of the current LDBA state $q$ is additionally processed by an \textit{environment actor} head $\pi_{\text{env}}$ that outputs the parameters of a distribution over the MDP action space $\mathcal A$ (e.g.\ the mean and standard deviation of a Gaussian distribution).
Intuitively, the scores $\lambda(z_u)$ model the probability of taking the $\varepsilon$-action that transitions to $u$, and $\lambda(z_q)$ corresponds to the probability of executing an MDP action rather than an $\varepsilon$-action.
Formally, the policy induces the hybrid action distribution
\begin{equation*}
    \pi(a\given s,q) = \begin{cases}
                            \hat \lambda(z_{u}) & \text{if } a = \varepsilon_{u}, u\in\mathcal E(q)\\
                            \hat \lambda(z_{q})\cdot \pi_{\text{env}}(a\given z_{q}) & \text{if } a\in\mathcal A,
                        \end{cases}
\end{equation*}
where $\varepsilon_u$ is the $\varepsilon$-action that transitions to $u$ and
\begin{equation*}
    \hat \lambda(z_v) = \frac{\exp(\lambda(z_v))}{\sum_{v'\in\mathcal E(q)\cup \{q\}}\exp(\lambda(z_{v'}))}
\end{equation*}
denotes the softmax-normalised score function.
Note that if no $\varepsilon$-actions are available (i.e.\ $\mathcal E(q) = \emptyset$), then the action distribution reduces to simply $\pi_{\text{env}}(a\given z_q)$. 

\paragraph{Training procedure.}
We train the policy end-to-end via goal-conditioned RL~\cite{DBLP:conf/ijcai/LiuZ022}: at the beginning of each episode, we sample a random LTL specification $\varphi\sim\mathcal D$ and assign a reward of 1 to visits to accepting states.
To make training converge faster in practice, we follow the approach of \citeauthor{DBLP:conf/iclr/JackermeierA25}~(\citeyear{DBLP:conf/iclr/JackermeierA25}) and design a training curriculum consisting of different stages of increasingly challenging formulae.
For example, we generally first train the agent with simple formulae of the form $\ltlF a$ before sampling more complex tasks.
Once the agent performs sufficiently well on tasks sampled from the current stage, we move on to the next stage.
Our approach is independent of the underlying RL algorithm.
Throughout our experiments, we use \textit{proximal policy optimisation} (PPO)~\cite{DBLP:journals/corr/SchulmanWDRK17}.
We provide further details on training, including on curricula, in Appendix \ref{app:training}.

\section{Experiments}
We evaluate our approach, named \textsc{SemLTL}, across a variety of domains and specifications.\footnote{Code available at: \href{https://github.com/mathiasj33/semltl}{github.com/mathiasj33/semltl}}
We aim to answer the following research questions:
\textbf{(RQ1)} Are our embeddings expressive enough to overcome the myopia of decomposition-based methods?
\textbf{(RQ2)}
Can \textsc{SemLTL} successfully learn generalist policies for zero-shot executing arbitrary LTL instructions?
\textbf{(RQ3)} Does our approach successfully scale to complex specifications, where previous methods fail?%

\paragraph{RQ1.}
Consider the grid-world environment depicted in Figure~\ref{fig:rq1} (left), inspired by \cite{DBLP:conf/icml/VaezipoorLIM21}.
The agent, located on the left, receives one of the following two goals:
$\ltlF (\text{parcel}\land\ltlF\text{hammer})$ or $\ltlF (\text{parcel}\land\ltlF\text{wrench})$, requiring it to enter the correct room via the one-way conveyor belts.
For decomposition based approaches, the first subtask is $\ltlF \text{parcel}$ in both cases.
The agent thus has to choose a room (in order to pick up a parcel) without knowing the entire specification.
Since both specifications are equally likely, but the agent has no way of differentiating between them, it can achieve a maximum success rate of 50\%.
In contrast, we observe that \textsc{SemLTL} does not suffer from such myopia and quickly converges to the optimal policy (Figure \ref{fig:rq1}, right).

\begin{figure}
    \hspace{0.6cm}
    \includegraphics[width=0.9\columnwidth]{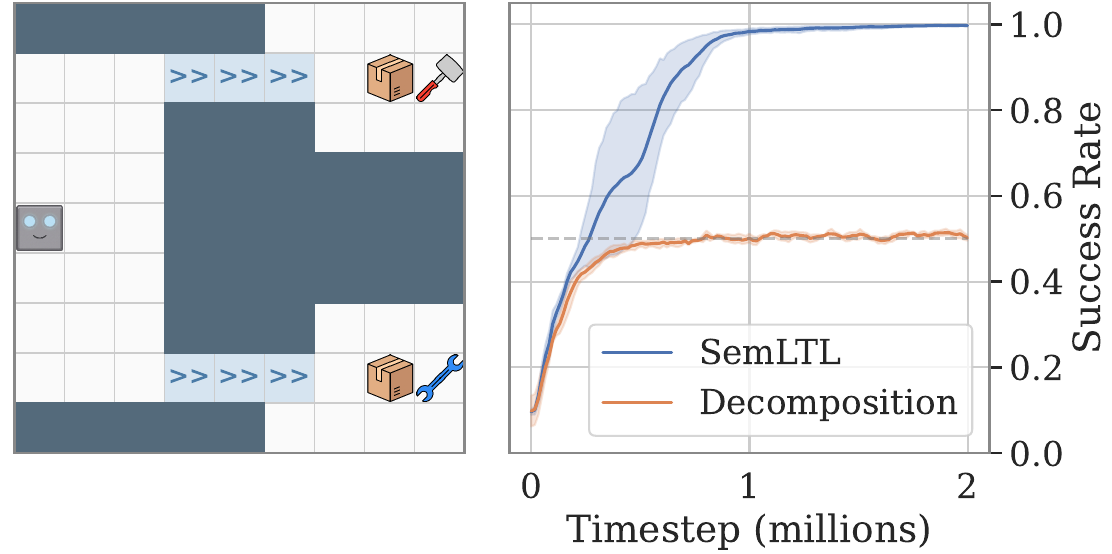}
    \caption{Conveyor environment example.
    The task is either $\ltlF (\text{parcel}\land\ltlF\text{hammer})$ or $\ltlF (\text{parcel}\land\ltlF\text{wrench})$.
    Once the agent is on a conveyor belt, it cannot go back.
    Without our propositional attention feature we can only achieve 50\% (like other decomposition based approaches), whereas our full approach finds the optimal solution.
    }
    \label{fig:rq1}
\end{figure}

\subsubsection{RQ2 and RQ3}
We now proceed to conduct a large-scale experimental study to investigate RQ2 and RQ3.

\paragraph{Environments.}
We rely on established benchmarks to evaluate \textsc{SemLTL}:
the \textit{LetterWorld} environment~\cite{DBLP:conf/icml/VaezipoorLIM21}, which consists of a $7\times 7$ grid with randomly placed letters corresponding to atomic propositions, and \textit{ZoneEnv}~\cite{DBLP:conf/icml/VaezipoorLIM21}, a high-dimensional robotic navigation environment with a continuous action space, where randomly placed zones of different colours form the atomic propositions.
We consider a more difficult variant of ZoneEnv with 8 colours (instead of 4) and a more complex observation space based on an RGB lidar sensor. 
For further details, see Appendix~\ref{app:envs}.

\begin{table*}[t]
\centering
\label{tab:adapted_results}
\resizebox{0.85\textwidth}{!}{ 
\begin{tabular}{lll rr r r rr}
\toprule
\multirow{2}{*}{} & \multirow{2}{*}{} & \multirow{2}{*}{Task Specification Type} & \multirow{2}{*}{$|\mathcal Q|$} & \multirow{2}{*}{$|\delta|$} & \textsc{LTL2Action} & \textsc{DeepLTL} & \multicolumn{2}{c}{\textsc{\textbf{SemLTL}} \textbf{(Ours)}} \\
\cmidrule(lr){6-6} \cmidrule(lr){7-7} \cmidrule(lr){8-9}
& & & & & SR / $\mu_\text{acc}$ & SR / $\mu_\text{acc}$ & SR / $\mu_\text{acc}$ & $\mu_\text{states}$ \\
\midrule

\multirow{8}{*}{\rotatebox{90}{Finite-horizon}} 
& \multirow{4}{*}{\rotatebox{90}{Letter}}
& \textsc{Small} & $9$ & $23$
& $0.67 \scriptstyle{\pm 0.12}$ 
& $\mathbf{0.98} \scriptstyle{\pm 0.01}$ 
& $\mathbf{0.98} \scriptstyle{\pm 0.01}$ & $4.12 \scriptstyle{\pm 0.02}$ \\

& & \textsc{Local-Safety}[$3,3$] & $233$ & $1,372$
& $0.57 \scriptstyle{\pm 0.07}$ 
& \timeout
& $\mathbf{0.99} \scriptstyle{\pm 0.00}$ & $5.38 \scriptstyle{\pm 0.00}$ \\

& & \textsc{Global-Safety}[$4,6$] & $1,072$ & $6,414$ 
& $0.12 \scriptstyle{\pm 0.03}$ 
& \timeout
& $\mathbf{0.91} \scriptstyle{\pm 0.01}$ & $6.77 \scriptstyle{\pm 0.04}$ \\

& & \textsc{Finite-Reactive}[$8,2$] & $300$ & $2,115$
& $0.74 \scriptstyle{\pm 0.04}$ 
& \timeout
& $\mathbf{1.00} \scriptstyle{\pm 0.00}$ & $2.80 \scriptstyle{\pm 0.10}$ \\

\cmidrule{2-9}

& \multirow{4}{*}{\rotatebox{90}{Zones}}
& \textsc{Small} & $9$ & $23$
& $0.52 \scriptstyle{\pm 0.07}$ 
& $\mathbf{0.93} \scriptstyle{\pm 0.02}$ 
& $0.92 \scriptstyle{\pm 0.02}$ & $3.40 \scriptstyle{\pm 0.01}$ \\

& & \textsc{Local-Safety}[$3,3$] & $83$ & $379$
& $0.66 \scriptstyle{\pm 0.11}$ 
& $\mathbf{0.98}\scriptstyle{\pm 0.01}$
& $0.93 \scriptstyle{\pm 0.06}$ & $5.09 \scriptstyle{\pm 0.14}$ \\

& & \textsc{Global-Safety}[$4,8$] & $1,162$ & $6,510$ 
& $0.39 \scriptstyle{\pm 0.05}$
& \timeout
& $\mathbf{0.79} \scriptstyle{\pm 0.06}$ & $6.73 \scriptstyle{\pm 0.18}$ \\

& & \textsc{Finite-Reactive}[$8,2$] & $107$ & $619$
& $0.74 \scriptstyle{\pm 0.11}$ 
& \timeout
& $\mathbf{0.98} \scriptstyle{\pm 0.00}$ & $2.96 \scriptstyle{\pm 0.16}$ \\

\midrule
\addlinespace[-0.02ex]
\midrule

\multirow{7}{*}{\rotatebox{90}{Infinite-horizon}} 
& \multirow{3}{*}{\rotatebox{90}{Letter}}
& \textsc{Small} & $7$ & $16$
& n/a 
& $6.15 \scriptstyle{\pm 1.97}$
& $\mathbf{7.13} \scriptstyle{\pm 0.51}$ & $4.76 \scriptstyle{\pm 0.09}$ \\

& & \textsc{Complex-Patrol}[$5,5$] & $49$ & $227$
& n/a 
& $1.78 \scriptstyle{\pm 0.26}$ 
& $\mathbf{1.83} \scriptstyle{\pm 0.10}$ & $6.44 \scriptstyle{\pm 0.03}$  \\

& & \textsc{Always-Reactive}[$5,1$] & $38$ & $204$
& n/a 
& \timeout
& $\mathbf{3.06} \scriptstyle{\pm 0.07}$ & $7.06 \scriptstyle{\pm 0.02}$ \\
\cmidrule{2-9}

& \multirow{4}{*}{\rotatebox{90}{Zones}}
& \textsc{Small} & $4$ & $10$
& n/a 
& $14.50 \scriptstyle{\pm 6.71}$ 
& $\mathbf{18.65} \scriptstyle{\pm 6.02}$ & $3.59 \scriptstyle{\pm 0.04}$ \\

& & \textsc{Reach-Stay}[$5$] & $34$ & $116$
& n/a 
& $914 \scriptstyle{\pm 862}$
& $\mathbf{2,503} \scriptstyle{\pm 497}$ & $7.29 \scriptstyle{\pm 0.15}$ \\

& & \textsc{Complex-Patrol}[$5,5$] & $45$ & $201$
& n/a 
& $2.13 \scriptstyle{\pm 1.45}$ 
& $\mathbf{2.44} \scriptstyle{\pm 0.44}$ & $6.33 \scriptstyle{\pm 0.10}$ \\

& & \textsc{Always-Reactive}[$5,1$] & $38$ & $204$
& n/a 
& \timeout
& $\mathbf{3.73} \scriptstyle{\pm 0.80}$ & $7.00 \scriptstyle{\pm 0.00}$ \\

\bottomrule
\end{tabular}
}

\caption{Evaluation results of \textsc{SemLTL} and the baselines.
For each task family, $|\mathcal Q|$ denotes the average number of LDBA states, and $|\delta|$ the average number of transitions.
We report success rate (SR) for finite tasks and number of visits to accepting states ($\mu_\text{acc}$) for infinite tasks, as well as number of constructed LDBA states ($\mu_\text{states}$) for \textsc{SemLTL}\@.
``\timeout''~denotes failure to produce a single action within a time limit of 600s.
Results are averaged over 5 seeds and 500 episodes per seed.
\textsc{SemLTL} demonstrates strong performance and robust scalability across tasks.}
\label{tab:main-results}
\end{table*}

\paragraph{Baselines.}
We compare our approach to \textsc{LTL2Action}, the only other approach that does not have to compute the entire automaton upfront, and \textsc{DeepLTL}, the state-of-the-art, automata-conditioned approach that does not suffer from myopia and is applicable to full LTL\@.
See Appendix~\ref{app:experiments} for implementation details and hyperparameters of \textsc{SemLTL} and the baselines, including training curricula.

\paragraph{Tasks.}
We consider a wide range of tasks for our evaluation.
First, we compare to a set of established LTL tasks from the literature~\cite{DBLP:conf/iclr/JackermeierA25}, which we aggregate and refer to as \textsc{Small} in our results.
Since these tasks are of limited complexity, we furthermore introduce several parameterised \textit{families} of both finite- and infinite-horizon tasks that enable us to systematically assess the scalability of our approach.
The \textsc{Local-Safety}[$k,m$] family (``avoidance" in \cite{DBLP:conf/icml/VaezipoorLIM21}), consists of $m$ disjunctions of reach-avoid specifications $\varphi[k]$ where $ \varphi[k] \equiv \neg a_1 \mathbin{\ltlU} (r_1 \land (\neg a_2 \mathbin{\ltlU} (r_2 \land \dots \land (\neg a_k \mathbin{\ltlU} r_k))))$.
Further, we define the variant \textsc{Global-Safety}[$k,m$], which fixes a single global proposition $a$ to avoid throughout the entire task.
We also introduce the \textsc{Finite-Reactive}[$k,m$] family, where a simple finite-goal needs to be reached.
However, whenever a trigger variable $t$ is observed, the agent needs to react by achieving specified response variables.
The overall task consists of $k$ such reaction rules, where each permits $m$ response variables. 
An example is $(\left(t\rightarrow \ltlF (r_{1}\lor r_2)\right) \ltlU g$.

For infinite-horizon tasks, we first consider the \textsc{Complex-Patrol}[$k,m$] family.
This consists of a disjunction of sequential reach tasks $\psi[k] \equiv \ltlF(r_1 \land \ltlF(\ldots \land \ltlF r_k))$ that need to be satisfied recurrently while avoiding a global safety constraint.
We also investigate the \textsc{Reach-Stay}[$k$] family of the form $\psi[k] \land \ltlFG a$, where $a$ is an eventual safety target.
Note that we do not consider this task family in LetterWorld, since the agent has to move at every timestep, and hence cannot stay continuously at the same proposition.
These two families cover complex tasks of the persistence and recurrence fragment, respectively.
We furthermore introduce the \textsc{Always-Reactive}[$k,m$] family, where tasks require maintaining reactive responses over an infinite horizon.
Concretely, a trigger $t_0$ must be seen infinitely often ($\ltlGF t_0$), while responding to a set of $k$ global reaction rules with $m$ response variables each (e.g.\ $\ltlG(t_0 \rightarrow \ltlF(r_1 \lor t_1))$, where in particular $t_1$ could be another trigger).
We choose suitable parameters to cover a broad spectrum of automaton sizes, and randomly sample 5 tasks per family for our experiments.

\paragraph{Results.}
Table \ref{tab:main-results} lists evaluation results of \textsc{SemLTL} and the baselines on both evaluation environments.
As is standard, we report success rates (SR) for finite-horizon tasks and average visits to accepting states ($\mu_{\text{acc}}$) for infinite-horizon tasks.
Intuitively, $\mu_{\text{acc}}$ captures how often the policy successfully completes an accepting cycle in the LDBA\@.
We also report the number of LDBA states constructed by \textsc{SemLTL} ($\mu_\text{states}$).

We observe that \textsc{SemLTL} in both environments successfully learns a policy that is able to generalise to diverse LTL specifications, achieving strong results throughout (\textbf{RQ2}).
\textsc{SemLTL} consistently performs much better than \textsc{LTL2Action}, which demonstrates the structure and learnability of our semantic embeddings.
For finite tasks, the semantic labelling of LDBAs only consists of the main formula.
Both approaches thus only track one formula (using formula progression) and differ solely in the way they embed that formula.
We clearly observe that \textsc{SemLTL} is able to leverage the same information much more effectively, and over a diverse set of typical RL specifications.
\textsc{DeepLTL} also achieves strong results on tasks where constructing the entire automaton and enumerating reach-avoid sequences is feasible.
This is expected, since it exhaustively analyses the entire automaton to form a complete and optimal plan to inform its policy. 
However, for more complex specifications \textsc{DeepLTL} fails to compute even a single action within the 600s time limit (denoted by \timeout), due to the prohibitively expensive sequence enumeration.
In contrast, we observe that \textsc{SemLTL} successfully scales to highly complex formulae and achieves strong performance even on large task instances with hundreds of automaton states (\textbf{RQ3}).
Notably, \textsc{SemLTL} often only considers a small fraction of the overall LDBA (see $\mu_\text{states}$ compared to $|Q|$).
This highlights the suitability of many common RL tasks for on-the-fly automata construction, and demonstrates that the theoretical benefits of \textsc{SemLTL} lead to significant advantages in practice.
We provide full experimental results, including on more parameters for the different task families, per-specification results on \textsc{Small} tasks, and visualisations of \textsc{SemLTL} trajectories in Appendix~\ref{app:experiments}.

\section{Conclusion}
We presented \textsc{SemLTL}, a novel approach to train generalist policies for satisfying arbitrary LTL specifications.
Our method is based on recent semantic LTL-to-automata translations that annotate each automaton state with rich, semantic information.
Not only does this naturally enable support for full LTL, but it also allows automata states to be computed efficiently on the fly, without constructing the entire automaton upfront.
We introduced a structured way to embed the semantic state information, and designed a suitable policy architecture based on these embeddings.
Experimental results on a wide variety of specifications demonstrate that our approach successfully learns policies that can zero-shot generalise to novel LTL tasks, while being significantly more scalable and constructing only a fraction of the automaton.

Future work will extend our approach to real-life robotics settings with unknown labelling functions and enhance scalability via the recently proposed LTLf+/PPLTL+ temporal logics~\cite{DBLP:conf/ijcai/AminofGRV25,DBLP:conf/ijcai/Giacomo0SWY25} and exploit good-for-MDP reduction pipelines ~\cite{DBLP:conf/aaai/Weinhuber26}.

\clearpage
\section*{Acknowledgements}
This work was supported in parts by the UKRI Erlangen AI Hub
on Mathematical and Computational Foundations of AI (No.\ EP/Y028872/1).
This work is part of the Intelligence-Oriented Verification\&Controller Synthesis (InOVationCS) project funded by the European Union under Grant Agreement No. 101171844.
MJ is funded by the EPSRC Centre for Doctoral Training in Autonomous Intelligent Machines and Systems (EP/S024050/1).

\bibliographystyle{named}
\bibliography{ijcai26}

@article{DBLP:journals/jmlr/NarvekarPLSTS20,
  author       = {Sanmit Narvekar and
                  Bei Peng and
                  Matteo Leonetti and
                  Jivko Sinapov and
                  Matthew E. Taylor and
                  Peter Stone},
  title        = {Curriculum Learning for Reinforcement Learning Domains: {A} Framework
                  and Survey},
  journal      = {J. Mach. Learn. Res.},
  volume       = {21},
  pages        = {181:1--181:50},
  year         = {2020}
}

@inproceedings{DBLP:conf/nips/HasseltGHMS16,
  author       = {Hado van Hasselt and
                  Arthur Guez and
                  Matteo Hessel and
                  Volodymyr Mnih and
                  David Silver},
  title        = {Learning values across many orders of magnitude},
  booktitle    = {{NIPS}},
  pages        = {4287--4295},
  year         = {2016}
}

@inproceedings{DBLP:journals/corr/KingmaB14,
  author       = {Diederik P. Kingma and
                  Jimmy Ba},
  title        = {Adam: {A} Method for Stochastic Optimization},
  booktitle    = {{ICLR} (Poster)},
  year         = {2015}
}

@inproceedings{DBLP:conf/emnlp/ChoMGBBSB14,
  author       = {Kyunghyun Cho and
                  Bart van Merrienboer and
                  {\c{C}}aglar G{\"{u}}l{\c{c}}ehre and
                  Dzmitry Bahdanau and
                  Fethi Bougares and
                  Holger Schwenk and
                  Yoshua Bengio},
  title        = {Learning Phrase Representations using {RNN} Encoder-Decoder for Statistical
                  Machine Translation},
  booktitle    = {{EMNLP}},
  pages        = {1724--1734},
  publisher    = {{ACL}},
  year         = {2014}
}

@inproceedings{DBLP:conf/nips/ZaheerKRPSS17,
  author       = {Manzil Zaheer and
                  Satwik Kottur and
                  Siamak Ravanbakhsh and
                  Barnab{\'{a}}s P{\'{o}}czos and
                  Ruslan Salakhutdinov and
                  Alexander J. Smola},
  title        = {Deep Sets},
  booktitle    = {{NIPS}},
  pages        = {3391--3401},
  year         = {2017}
}

@inproceedings{DBLP:conf/esws/SchlichtkrullKB18,
  author       = {Michael Sejr Schlichtkrull and
                  Thomas N. Kipf and
                  Peter Bloem and
                  Rianne van den Berg and
                  Ivan Titov and
                  Max Welling},
  title        = {Modeling Relational Data with Graph Convolutional Networks},
  booktitle    = {{ESWC}},
  series       = {Lecture Notes in Computer Science},
  volume       = {10843},
  pages        = {593--607},
  publisher    = {Springer},
  year         = {2018}
}

@inproceedings{ltlgnn,
    author    = {Mattia Giuri and Mathias Jackermeier and Alessandro Abate},
    title     = {Zero-Shot Instruction Following in RL via Structured {LTL} Representations},
    booktitle = {{ICML} Workshop on Programmatic Representations for Agent Learning},
    year      = {2025}
}

@article{DBLP:journals/tnn/ScarselliGTHM09,
  author       = {Franco Scarselli and
                  Marco Gori and
                  Ah Chung Tsoi and
                  Markus Hagenbuchner and
                  Gabriele Monfardini},
  title        = {The Graph Neural Network Model},
  journal      = {{IEEE} Trans. Neural Networks},
  volume       = {20},
  number       = {1},
  pages        = {61--80},
  year         = {2009}
}

@inproceedings{DBLP:journals/corr/abs-2508-01561,
  author       = {Zijian Guo and
                  Ilker Isik and
                  H. M. Sabbir Ahmad and
                  Wenchao Li},
  title        = {One Subgoal at a Time: Zero-Shot Generalization to Arbitrary Linear
                  Temporal Logic Requirements in Multi-Task Reinforcement Learning},
  booktitle      = {NeurIPS},
  year         = {2025}
}

@inproceedings{DBLP:conf/icra/LiuSRJ0T24,
  author       = {Jason Xinyu Liu and
                  Ankit Shah and
                  Eric Rosen and
                  Mingxi Jia and
                  George Konidaris and
                  Stefanie Tellex},
  title        = {Skill Transfer for Temporal Task Specification},
  booktitle    = {{ICRA}},
  pages        = {2535--2541},
  publisher    = {{IEEE}},
  year         = {2024}
}

@inproceedings{DBLP:conf/cdc/SadighKCSS14,
  author    = {Dorsa Sadigh and
               Eric S. Kim and
               Samuel Coogan and
               S. Shankar Sastry and
               Sanjit A. Seshia},
  title     = {A learning based approach to control synthesis of Markov decision
               processes for linear temporal logic specifications},
  booktitle = {{CDC}},
  pages     = {1091--1096},
  publisher = {{IEEE}},
  year      = {2014}
}

@article{DBLP:journals/ml/XuF24,
  author       = {Duo Xu and
                  Faramarz Fekri},
  title        = {Generalization of temporal logic tasks via future dependent options},
  journal      = {Mach. Learn.},
  volume       = {113},
  number       = {10},
  pages        = {7509--7540},
  year         = {2024}
}

@article{DBLP:journals/ai/SuttonPS99,
  author       = {Richard S. Sutton and
                  Doina Precup and
                  Satinder Singh},
  title        = {Between MDPs and Semi-MDPs: {A} Framework for Temporal Abstraction
                  in Reinforcement Learning},
  journal      = {Artif. Intell.},
  volume       = {112},
  number       = {1-2},
  pages        = {181--211},
  year         = {1999}
}

@inproceedings{DBLP:conf/atva/BrazdilCCFKKPU14,
  author       = {Tom{\'{a}}s Br{\'{a}}zdil and
                  Krishnendu Chatterjee and
                  Martin Chmelik and
                  Vojtech Forejt and
                  Jan K{\v r}et{\'{\i}}nsk{\'{y}} and
                  Marta Z. Kwiatkowska and
                  David Parker and
                  Mateusz Ujma},
  title        = {Verification of Markov Decision Processes Using Learning Algorithms},
  booktitle    = {{ATVA}},
  series       = {Lecture Notes in Computer Science},
  volume       = {8837},
  pages        = {98--114},
  publisher    = {Springer},
  year         = {2014}
}

@inproceedings{DBLP:conf/qest/HasanbeigKA22,
  author       = {Mohammadhosein Hasanbeig and
                  Daniel Kroening and
                  Alessandro Abate},
  title        = {{LCRL:} Certified Policy Synthesis via Logically-Constrained Reinforcement
                  Learning},
  booktitle    = {{QEST}},
  series       = {Lecture Notes in Computer Science},
  volume       = {13479},
  pages        = {217--231},
  publisher    = {Springer},
  year         = {2022}
}

@inproceedings{DBLP:conf/icra/Bozkurt0ZP20,
  author       = {Alper Kamil Bozkurt and
                  Yu Wang and
                  Michael M. Zavlanos and
                  Miroslav Pajic},
  title        = {Control Synthesis from Linear Temporal Logic Specifications using
                  Model-Free Reinforcement Learning},
  booktitle    = {{ICRA}},
  pages        = {10349--10355},
  publisher    = {{IEEE}},
  year         = {2020}
}

@inproceedings{DBLP:conf/icml/Voloshin0Y23,
  author       = {Cameron Voloshin and
                  Abhinav Verma and
                  Yisong Yue},
  title        = {Eventual Discounting Temporal Logic Counterfactual Experience Replay},
  booktitle    = {{ICML}},
  series       = {Proceedings of Machine Learning Research},
  volume       = {202},
  pages        = {35137--35150},
  publisher    = {{PMLR}},
  year         = {2023}
}

@inproceedings{DBLP:conf/tacas/HahnPSSTW19,
  author    = {Ernst Moritz Hahn and
               Mateo Perez and
               Sven Schewe and
               Fabio Somenzi and
               Ashutosh Trivedi and
               Dominik Wojtczak},
  title     = {Omega-Regular Objectives in Model-Free Reinforcement Learning},
  booktitle = {{TACAS} {(1)}},
  series    = {Lecture Notes in Computer Science},
  volume    = {11427},
  pages     = {395--412},
  publisher = {Springer},
  year      = {2019}
}

@inproceedings{DBLP:conf/iclr/JackermeierA25,
  author    = {Mathias Jackermeier and
               Alessandro Abate},
  title     = {DeepLTL: Learning to Efficiently Satisfy Complex {LTL} Specifications
               for Multi-Task {RL}},
  booktitle = {{ICLR}},
  publisher = {OpenReview.net},
  year      = {2025}
}

@inproceedings{DBLP:conf/icml/VaezipoorLIM21,
  author    = {Pashootan Vaezipoor and
               Andrew C. Li and
               Rodrigo Toro Icarte and
               Sheila A. McIlraith},
  title     = {LTL2Action: Generalizing {LTL} Instructions for Multi-Task {RL}},
  booktitle = {{ICML}},
  series    = {Proceedings of Machine Learning Research},
  volume    = {139},
  pages     = {10497--10508},
  publisher = {{PMLR}},
  year      = {2021}
}

@article{DBLP:journals/jacm/EsparzaKS20,
  author  = {Javier Esparza and
             Jan K{\v r}et{\'{\i}}nsk{\'{y}} and
             Salomon Sickert},
  title   = {A Unified Translation of Linear Temporal Logic to {\(\omega\)}-Automata},
  journal = {J. {ACM}},
  volume  = {67},
  number  = {6},
  pages   = {33:1--33:61},
  year    = {2020}
}

@inproceedings{DBLP:conf/tacas/KretinskyMPZ25,
  author    = {Jan K{\v r}et{\'{\i}}nsk{\'{y}} and
               Tobias Meggendorfer and
               Maximilian Prokop and
               Ashkan Zarkhah},
  title     = {SemML: Enhancing Automata-Theoretic {LTL} Synthesis with Machine Learning},
  booktitle = {{TACAS} {(1)}},
  series    = {Lecture Notes in Computer Science},
  volume    = {15696},
  pages     = {233--253},
  publisher = {Springer},
  year      = {2025}
}

@inproceedings{DBLP:conf/atva/KretinskyMM19,
  author    = {Jan K{\v r}et{\'{\i}}nsk{\'{y}} and
               Alexander Manta and
               Tobias Meggendorfer},
  title     = {Semantic Labelling and Learning for Parity Game Solving in {LTL} Synthesis},
  booktitle = {{ATVA}},
  series    = {Lecture Notes in Computer Science},
  volume    = {11781},
  pages     = {404--422},
  publisher = {Springer},
  year      = {2019}
}

@inproceedings{DBLP:conf/cav/KretinskyMPR23,
  author    = {Jan K{\v r}et{\'{\i}}nsk{\'{y}} and
               Tobias Meggendorfer and
               Maximilian Prokop and
               Sabine Rieder},
  title     = {Guessing Winning Policies in {LTL} Synthesis by Semantic Learning},
  booktitle = {{CAV} {(1)}},
  series    = {Lecture Notes in Computer Science},
  volume    = {13964},
  pages     = {390--414},
  publisher = {Springer},
  year      = {2023}
}

@inproceedings{DBLP:conf/ijcai/LiuZ022,
  author    = {Minghuan Liu and
               Menghui Zhu and
               Weinan Zhang},
  title     = {Goal-Conditioned Reinforcement Learning: Problems and Solutions},
  booktitle = {{IJCAI}},
  pages     = {5502--5511},
  publisher = {ijcai.org},
  year      = {2022}
}

@article{DBLP:journals/corr/SchulmanWDRK17,
  author  = {John Schulman and
             Filip Wolski and
             Prafulla Dhariwal and
             Alec Radford and
             Oleg Klimov},
  title   = {Proximal Policy Optimization Algorithms},
  journal = {CoRR},
  volume  = {abs/1707.06347},
  year    = {2017}
}

@inproceedings{DBLP:conf/iclr/LeonSB22,
  author       = {Borja G. Le{\'{o}}n and
                  Murray Shanahan and
                  Francesco Belardinelli},
  title        = {In a Nutshell, the Human Asked for This: Latent Goals for Following
                  Temporal Specifications},
  booktitle    = {{ICLR}},
  publisher    = {OpenReview.net},
  year         = {2022}
}

@inproceedings{DBLP:conf/nips/0002M023,
  author       = {Wenjie Qiu and
                  Wensen Mao and
                  He Zhu},
  title        = {Instructing Goal-Conditioned Reinforcement Learning Agents with Temporal
                  Logic Objectives},
  booktitle    = {NeurIPS},
  year         = {2023}
}

@inproceedings{DBLP:conf/cav/SickertEJK16,
  author       = {Salomon Sickert and
                  Javier Esparza and
                  Stefan Jaax and
                  Jan K{\v r}et{\'{\i}}nsk{\'{y}}},
  title        = {Limit-Deterministic B{\"{u}}chi Automata for Linear Temporal
                  Logic},
  booktitle    = {{CAV} {(2)}},
  series       = {Lecture Notes in Computer Science},
  volume       = {9780},
  pages        = {312--332},
  publisher    = {Springer},
  year         = {2016}
}

@inproceedings{DBLP:conf/focs/Pnueli77,
  author       = {Amir Pnueli},
  title        = {The Temporal Logic of Programs},
  booktitle    = {{FOCS}},
  pages        = {46--57},
  publisher    = {{IEEE} Computer Society},
  year         = {1977}
}

@book{DBLP:books/lib/SuttonB2018,
  author       = {Richard S. Sutton and
                  Andrew G. Barto},
  title        = {Reinforcement learning - an introduction, 2nd Edition},
  publisher    = {{MIT} Press},
  year         = {2018}
}

@inproceedings{DBLP:conf/aaai/BacchusK96,
  author       = {Fahiem Bacchus and
                  Froduald Kabanza},
  title        = {Planning for Temporally Extended Goals},
  booktitle    = {AAAI/IAAI, Vol. 2},
  pages        = {1215--1222},
  publisher    = {{AAAI} Press / The {MIT} Press},
  year         = {1996}
}

@inproceedings{DBLP:conf/aaai/Weinhuber26,
  author       = {Christoph Weinhuber and
                  Giuseppe {De Giacomo} and 
                  Yong Li and
                  Sven Schewe and
                  Qiyi Tang},
  title        = {Good-for-MDP State Reduction for Stochastic LTL Planning},
  booktitle    = {{AAAI}},
  publisher    = {{AAAI} Press},
  year         = {2026}
}

@inproceedings{DBLP:conf/cav/KretinskyE12,
  author       = {Jan K{\v r}et{\'{\i}}nsk{\'{y}} and
                  Javier Esparza},
  editor       = {P. Madhusudan and
                  Sanjit A. Seshia},
  title        = {Deterministic Automata for the (F, G)-Fragment of {LTL}},
  booktitle    = {Computer Aided Verification - 24th International Conference, {CAV}
                  2012, Berkeley, CA, USA, July 7-13, 2012 Proceedings},
  series       = {Lecture Notes in Computer Science},
  volume       = {7358},
  pages        = {7--22},
  publisher    = {Springer},
  year         = {2012},
  url          = {https://doi.org/10.1007/978-3-642-31424-7\_7},
  doi          = {10.1007/978-3-642-31424-7\_7},
  bibsource    = {dblp computer science bibliography, https://dblp.org}
}

@inproceedings{DBLP:conf/cav/EsparzaK14,
  author       = {Javier Esparza and
                  Jan K{\v r}et{\'{\i}}nsk{\'{y}}},
  editor       = {Armin Biere and
                  Roderick Bloem},
  title        = {From {LTL} to Deterministic Automata: {A} Safraless Compositional
                  Approach},
  booktitle    = {Computer Aided Verification - 26th International Conference, {CAV}
                  2014, Held as Part of the Vienna Summer of Logic, {VSL} 2014, Vienna,
                  Austria, July 18-22, 2014. Proceedings},
  series       = {Lecture Notes in Computer Science},
  volume       = {8559},
  pages        = {192--208},
  publisher    = {Springer},
  year         = {2014},
  url          = {https://doi.org/10.1007/978-3-319-08867-9\_13},
  doi          = {10.1007/978-3-319-08867-9\_13},
  bibsource    = {dblp computer science bibliography, https://dblp.org}
}

@inproceedings{Owl,
  author       = {Jan K{\v r}et{\'{\i}}nsk{\'{y}} and
                  Tobias Meggendorfer and
                  Salomon Sickert},
  editor       = {Shuvendu K. Lahiri and
                  Chao Wang},
  title        = {Owl: {A} Library for {\(\omega\)}-Words, Automata, and {LTL}},
  booktitle    = {Automated Technology for Verification and Analysis - 16th International
                  Symposium, {ATVA} 2018, Los Angeles, CA, USA, October 7-10, 2018,
                  Proceedings},
  series       = {Lecture Notes in Computer Science},
  volume       = {11138},
  pages        = {543--550},
  publisher    = {Springer},
  year         = {2018},
  url          = {https://doi.org/10.1007/978-3-030-01090-4\_34},
  doi          = {10.1007/978-3-030-01090-4\_34},
  bibsource    = {dblp computer science bibliography, https://dblp.org}
}

@inproceedings{10.1109/TIME.2013.19,
author = {Li, Jianwen and Zhang, Lijun and Pu, Geguang and Vardi, Moshe Y. and He, Jifeng},
title = {LTL Satisfiability Checking Revisited},
year = {2013},
isbn = {9781479922413},
publisher = {IEEE Computer Society},
address = {USA},
url = {https://doi.org/10.1109/TIME.2013.19},
doi = {10.1109/TIME.2013.19},
booktitle = {Proceedings of the 2013 20th International Symposium on Temporal Representation and Reasoning},
pages = {91–98},
numpages = {8},
series = {TIME '13}
}

@inproceedings{LTL_Normalization,
author = {Sickert, Salomon and Esparza, Javier},
title = {An Efficient Normalisation Procedure for Linear Temporal Logic and Very Weak Alternating Automata},
year = {2020},
isbn = {9781450371049},
publisher = {Association for Computing Machinery},
address = {New York, NY, USA},
url = {https://doi.org/10.1145/3373718.3394743},
doi = {10.1145/3373718.3394743},
booktitle = {Proceedings of the 35th Annual ACM/IEEE Symposium on Logic in Computer Science},
pages = {831–844},
numpages = {14},
location = {Saarbr\"{u}cken, Germany},
series = {LICS '20}
}

@inproceedings{impossibility_result,
author = {Hahn, Ernst Moritz and Perez, Mateo and Schewe, Sven and Somenzi, Fabio and Trivedi, Ashutosh and Wojtczak, Dominik},
title = {An Impossibility Result in Automata-Theoretic Reinforcement Learning},
year = {2022},
isbn = {978-3-031-19991-2},
publisher = {Springer-Verlag},
address = {Berlin, Heidelberg},
url = {https://doi.org/10.1007/978-3-031-19992-9_3},
doi = {10.1007/978-3-031-19992-9_3},
booktitle = {Automated Technology for Verification and Analysis: 20th International Symposium, ATVA 2022, Virtual Event, October 25–28, 2022, Proceedings},
pages = {42–57},
numpages = {16}
}

@article{DBLP:journals/corr/OhSLK17,
  author       = {Junhyuk Oh and
                  Satinder Singh and
                  Honglak Lee and
                  Pushmeet Kohli},
  title        = {Zero-Shot Task Generalization with Multi-Task Deep Reinforcement Learning},
  journal      = {CoRR},
  volume       = {abs/1706.05064},
  year         = {2017},
  url          = {http://arxiv.org/abs/1706.05064},
  eprinttype    = {arXiv},
  eprint       = {1706.05064},
  bibsource    = {dblp computer science bibliography, https://dblp.org}
}

@inproceedings{DBLP:conf/ijcai/Giacomo0SWY25,
  author       = {Giuseppe {De Giacomo} and
                  Yong Li and
                  Sven Schewe and
                  Christoph Weinhuber and
                  Pian Yu},
  title        = {Solving MDPs with LTLf+ and {PPLTL+} Temporal Objectives},
  booktitle    = {{IJCAI}},
  pages        = {8491--8499},
  publisher    = {ijcai.org},
  year         = {2025}
}

@inproceedings{DBLP:conf/ijcai/AminofGRV25,
  author       = {Benjamin Aminof and
                  Giuseppe {De Giacomo} and
                  Sasha Rubin and
                  Moshe Y. Vardi},
  title        = {LTLf+ and {PPLTL+:} Extending LTLf and {PPLTL} to Infinite Traces},
  booktitle    = {{IJCAI}},
  pages        = {8447--8455},
  publisher    = {ijcai.org},
  year         = {2025}
}

@misc{arxivVersion,
      title={Semantically Labelled Automata for Multi-Task Reinforcement Learning with LTL Instructions}, 
      author={Alessandro Abate and Giuseppe {De Giacomo} and Mathias Jackermeier and Jan Kretínský and Maximilian Prokop and Christoph Weinhuber},
      year={2026},
      eprint={2602.06746},
      archivePrefix={arXiv},
      primaryClass={cs.AI},
      url={https://arxiv.org/abs/2602.06746}, 
}


\appendix

\clearpage
\section{Formal definitions}\label{appendix:formal_definitions}
\subsection{ Linear Temporal Logic Satisfaction Relation}
\label{appendix:ltl_sat_relation}
Let $w$ be an infinite sequence of letters and $\varphi$ be a \ltl formulae.
The satisfaction relation $w\models\varphi$ is defined inductively, as follows:
\[
\begin{array}{ll}
w \models \ttrue                \!& \iff \text{true}, \\
w \models \ffalse               \!& \iff \text{false}, \\
w \models a                     \!& \iff a \in w[0], \\
w \models \lnot \varphi         \!& \iff w \not\models \varphi, \\
w \models \ltlG\varphi         \!& \iff \forall k.\; w_k \models \varphi, \\
w \models \ltlF\varphi         \!& \iff \exists k.\; w_k \models \varphi, \\
w \models \varphi \ltlU \psi   \!& \iff \exists k.\bigl(w_k \models \psi \land \forall\, j < k,\; w_j \models \varphi\bigr) \\
\end{array} 
\]
\subsection{Formula progression}\label{app:prog}
We present a modern version of formula progression from \cite{DBLP:journals/jacm/EsparzaKS20}.
Let $\varphi,\psi$ be \ltl formulae over \ap{} and $\sigma\in 2^{\ap}$.
The full semantics of the formula progression operator $\prog$ are inductively defined as follows:

\begin{align*}
&\prog(\ttrue,\sigma) &&= \ttrue & \\
&\prog(a,\sigma) &&= \begin{cases}
\ttrue, \text{if } a\in\sigma \\\ffalse, \text{if } a\not\in\nu
\end{cases}& \\
&\prog(\neg\varphi, \sigma) &&= \neg\prog(\varphi, \sigma) \\
&\prog(\varphi\land\psi,\sigma) &&= \phantom{\neg}\prog(\varphi,\sigma)\land \prog(\psi,\sigma) & \\
&\prog(\ltlX\varphi,\sigma) &&= \phantom{\neg}\varphi & \\
&\prog(\ltlF\varphi,\sigma) &&= \phantom{\neg}\prog(\varphi,\sigma)\lor\ltlF\varphi & \\
&\prog(\ltlG\varphi,\sigma) &&= \phantom{\neg}\prog(\varphi,\sigma)\land \ltlG\varphi & \\
&\prog(\varphi\ltlU\psi,\sigma) &&= \phantom{\neg}\prog(\psi,\sigma) \lor (\prog(\varphi,\sigma)\land (\varphi \ltlU \psi)) &
\end{align*}

\subsection{Obligation Sets}\label{app:obset}
Obligation sets are originally defined by \cite{10.1109/TIME.2013.19} to approximate infinite behavior of \ltl formulae and detect easy solutions for \ltl satisfiability.
We slightly adapt the definition s.t.\ it reasons about sets of letters $\sigma$, which allows for an efficient computation using binary decision diagrams.
Formally, for a formula $\varphi$, the obligation sets $\obset$ are defined inductively as:
\begin{align*}
&\obset(\ttrue) &&= 2^\ap & \\
&\obset(a) &&= \{\sigma\mid\sigma\in2^\ap.a \in \sigma\} & \\
&\obset(\neg\varphi) &&= 2^\ap \setminus \obset(\varphi) & \\
&\obset(\varphi\land\psi) &&= \obset(\varphi)\cap \obset(\psi) & \\
&\obset(\ltlX\varphi) &&= \obset(\varphi) & \\
&\obset(\ltlF\varphi) &&= \obset(\varphi) & \\
&\obset(\ltlG\varphi) &&= \obset(\varphi) & \\
&\obset(\varphi\ltlU\psi) &&= \obset(\psi) & \\
\end{align*}
The major property of obligation sets is that for any letter $\sigma \in \obset(\varphi)$, we have $\sigma^\omega\models\varphi$, which is given as Theorem 1 in \cite{10.1109/TIME.2013.19}.

\section{Derivation of DeepLTL Complexity}
\label{app:complexity}
\begin{theorem}
Let $\mathcal{Q}$ be the set of states in the LDBA, $\ap$ be the set of atomic propositions, and \textsf{Paths} be the set of all simple paths (i.e.\ paths without loops) starting from the current state $q\in\mathcal Q$.
Let $C(n)$ denote the complexity of evaluating the value function on a sequence of length $n$. The worst-case time complexity of computing reach-avoid sequences (Algorithm 1;~\cite{DBLP:conf/iclr/JackermeierA25}) is:
\[
\mathcal{O}\left(|\mathsf{Paths}|\cdot \left(2^{|\ap|}\cdot |\mathcal Q| + C(|\mathcal Q|)\right)\right)
\]
where $|\mathsf{Paths}|$ is upper-bounded by $e\cdot|\mathcal{Q}|!$.
\end{theorem}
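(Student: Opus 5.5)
The plan is to bound the cost of DeepLTL's Algorithm 1 by splitting it into two phases: enumerating simple paths in the LDBA from the current state $q$, and doing per-path work. For the per-path work, we separately account for building the reach-avoid sequence and for scoring it with the value function. I will model Algorithm 1 as a depth-first search from $q$. It extends a path only to states not yet on it, stops a path when it closes an accepting loop or reaches a sink, and for each completed path builds the reach-avoid sequence and calls the critic once. The total cost is then $\sum_{p\in\mathsf{Paths}} (\text{construct}(p) + \text{eval}(p))$. Bounding each summand uniformly gives the product form.

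\textbf{Per-path construction.} A simple path has at most $|\mathcal Q|$ states, hence at most $|\mathcal Q|$ edges. For each edge $(q_i,q_{i+1})$ the algorithm builds the reach set, namely the letters $\sigma\in 2^{\ap}$ with $\delta(q_i,\sigma)=q_{i+1}$. It also builds the avoid set, namely the letters leading to a rejecting sink or otherwise off the path. Both are obtained by scanning all $2^{|\ap|}$ letters once, with constant-time transition lookups. This gives $\mathcal O(2^{|\ap|}\cdot|\mathcal Q|)$ per path. The DFS traversal overhead per path is $\mathcal O(|\mathcal Q|\cdot 2^{|\ap|})$ as well, so it is absorbed. $\varepsilon$-edges are treated as extra edges without changing this bound.

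\textbf{Per-path evaluation.} The resulting sequence has length at most $|\mathcal Q|$. Scoring it costs at most $C(|\mathcal Q|)$, assuming $C$ is monotone. Summing over paths gives $\mathcal O(|\mathsf{Paths}|\cdot(2^{|\ap|}|\mathcal Q| + C(|\mathcal Q|)))$. The final argmax over the scored sequences is linear in $|\mathsf{Paths}|$ and is absorbed.

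\textbf{Bounding $|\mathsf{Paths}|$.} This is the combinatorial step, and I expect it to be the main point to get right. A simple path from $q$ of length $k$ (in edges) is determined by an ordered choice of $k$ distinct states among the other $|\mathcal Q|-1$. So there are at most $\frac{(n-1)!}{(n-1-k)!}$ such paths, with $n=|\mathcal Q|$. Summing over $k=0,\dots,n-1$ gives
\[
\sum_{k=0}^{n-1}\frac{(n-1)!}{(n-1-k)!}=(n-1)!\sum_{j=0}^{n-1}\frac{1}{j!}\le e\,(n-1)!\le e\cdot n!.
\]
Paths ending by closing a loop to an earlier state add at most one extra edge per simple prefix, which costs at most a factor $n$. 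Under that reading the bound becomes $e\cdot n\cdot(n-1)!=e\cdot n!$, matching the statement exactly. I would present this version so that the extra loop-closing edge is covered.
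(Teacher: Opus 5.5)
Your proposal is correct and has the same skeleton as the paper's proof: a DFS whose recursion tree has one node per simple path from $q$, $\mathcal O(2^{|\ap|}\cdot|\mathcal Q|)$ work per path, one critic call of cost $C(|\mathcal Q|)$ per recorded path, and a factorial sum bounded by a partial sum of the series for $e$. The bookkeeping differs in two places.

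\textbf{Where the factor $|\mathcal Q|$ comes from.} The paper gets it from checking, for each symbol in $2^{\ap}\cup\mathcal E$, whether the successor already lies on the current path. You get it from the path length when assembling reach and avoid sets, which also accounts for building the sequences themselves.

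\textbf{The path count.} The paper counts all $k$-permutations of $\mathcal Q$, namely $\sum_{k=1}^{|\mathcal Q|}|\mathcal Q|!/(|\mathcal Q|-k)!$. This does not pin the first state to $q$, so it overcounts paths from $q$ by exactly a factor $|\mathcal Q|$. Your count $(n-1)!\sum_{j=0}^{n-1}1/j!\le e\,(n-1)!$ is the correct one for a fixed start. You then spend the spare factor $n$ on loop-closing edges.

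\textbf{Caveat.} That last move quietly changes the quantity in the theorem. There, $\mathsf{Paths}$ is the set of simple paths. The paper gets the parametric form $\mathcal O(|\mathsf{Paths}|\cdot(\dots))$ by asserting that the number of critic evaluations is at most the number of recursion-tree nodes, i.e.\ one recorded sequence per simple path.

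In your model, every accepting back-edge yields its own sequence and critic call. A single node can therefore trigger up to $|\mathcal Q|$ constructions and evaluations. What your argument literally establishes is $\mathcal O(|\mathcal Q|\cdot|\mathsf{Paths}|\cdot(2^{|\ap|}|\mathcal Q|+C(|\mathcal Q|)))$. This coincides with the statement only after substituting worst-case bounds: $|\mathcal Q|\cdot e\,(|\mathcal Q|-1)! = e\cdot|\mathcal Q|!$.

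So you have two options. Adopt the paper's convention of one evaluation per simple path. Or present your result honestly as the explicit bound $\mathcal O\bigl(e\cdot|\mathcal Q|!\cdot(2^{|\ap|}|\mathcal Q|+C(|\mathcal Q|))\bigr)$, rather than renaming the enlarged set of loop-closed paths $\mathsf{Paths}$.
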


\begin{proof} 
The algorithm performs a depth-first search that induces a recursion tree $\mathcal{T}$, where each node represents a unique simple path from the root. Thus, the total number of recursive calls is exactly $|\mathsf{Paths}|$.
In the worst-case scenario (a fully connected LDBA), $|\mathsf{Paths}|$ is the sum of all $k$-permutations of states for path lengths $k=1$ to $|\mathcal{Q}|$:
\[
|\mathsf{Paths}| = \sum_{k=1}^{|\mathcal{Q}|} \frac{|\mathcal{Q}|!}{(|\mathcal{Q}|-k)!} = |\mathcal{Q}|! \sum_{k=0}^{|\mathcal{Q}|-1} \frac{1}{k!} \xrightarrow{|\mathcal Q|\rightarrow \infty} e \cdot |\mathcal{Q}|!
\]

For each visited state (node in $\mathcal{T}$), the algorithm iterates through the alphabet $\Sigma = 2^{\ap} \cup \mathcal E$ and checks if the next state $q'$ exists in the current path $p$. The combined complexity of this is $\mathcal O(2^{|\ap|}\cdot|\mathcal Q|)$.
Finally, the value function $V^\pi$ is computed for a subset of extracted paths. The number of evaluations is upper-bounded by $|\mathsf{Paths}|$, yielding an asymptotic evaluation cost of $|\mathsf{Paths}|\cdot C(|\mathcal{Q}|)$.

Taking the complexities together and factoring out $|\mathsf{Paths}|$ yields the stated complexity.
\end{proof}

\section{Example with Breakpoint Formulae} \label{app:breakpoint}
We present another example automaton that uses the breakpoint formulae in the semantic labelling.
Consider the formula $\ltlF r \land \ltlGF(y \land \ltlX y)$, which denotes that we need to see $r$ \textit{once} and we need to see two $y$s in direct succession \textit{infinitely often}.
The latter being a recurrence formula means we need to track the progress of the inner formula $\ltlF(y \land \ltlX y)$ using the breakpoint formula.

The resulting LDBA with semantic labelling can be seen in Figure \ref{fig:breakpoint_aut}.
The states $q_0$ and $q_1$ work as in the prior example but don't contain breakpoint formulae just yet, as we are still in the initial component of the LDBA.
After transitioning to the accepting component, we populate the breakpoint formula in $q_2$ with the inner formula of the recurrence, namely $\ltlF(y \land \ltlX y)$.
Seeing a single $y$ transitions to state $q_3$, where the remaining obligation of the breakpoint formula is to either see another $y$ (completing the two successive $y$s) or seeing two successive $y$s some time in the future.
If we do see another $y$ we transition to $q_4$, which is accepting as the breakpoint formula has progressed to $\ttrue$.
After that we reset the breakpoint formula, which corresponds returning to $q_2$.
Thus, we see an accepting state whenever we have two $y$s in succession and thus we see an accepting state infinitely often exactly if we see two $y$s infinitely often.
 
\begin{figure}[h]
    \centering
    \begin{tikzpicture}[
        >={Stealth[round]}, 
        thick,
        font=\small,
        node distance=0.6cm, 
        box/.style={draw, rectangle, rounded corners=5pt, minimum size=1cm, fill=white},
        autstate/.style={draw, circle, inner sep=2pt, minimum size=.4cm, fill=white},
        section label/.style={font=\bfseries\footnotesize, color=gray!80, align=center},
        annotation/.style={font=\scriptsize\itshape, color=gray!90, align=left}
    ]

    \node[autstate, initial, initial text=, fill=autState_color!50] (q0) at (0,0) {$q_0$};
    \node[autstate, fill=autState_color2!50] (q1) at ($(q0) + (1.2,0)$) {$q_1$};
    
    \node[autstate, fill=autState_color3!50] (q2) at ($(q1) + (0, -1.2)$) {$q_2$};
    \node[autstate, fill=autState_color4!50] (q3) at ($(q2) - (1.2, 0)$) {$q_3$};
    \node[autstate, fill=autState_color5!50, double] (q4) at ($(q3) + (0.6, -1)$) {$q_4$};

    \path[->] 
        (q0) edge[] node[above] {$r$} (q1)
        (q1) edge[] node[right] {$\varepsilon$} (q2)
        (q2) edge[bend right=20] node[above] {$y$} (q3)
        (q3) edge[bend right=20] node[below] {$\neg y$} (q2)
        (q0) edge[loop, in=70, out=110,  looseness=5] node[left,near start] {$\neg r$} (q0)
        (q1) edge[loop, in=70, out=110,  looseness=5] node[right,near end] {$\ttrue$} (q1)
        (q2) edge[loop, in=20, out=340,  looseness=5] node[above ] {$\neg y$} (q2)
        (q3) edge[in=150,out=south] node[below left] {$y$} (q4)
        (q4) edge[in=south,out=20] node[below right] {$\ttrue$} (q2)
        ;
    

    \node[right=1cm of q1, box, inner sep=-1pt, fill=autState_color2!30] (semLab1) {
        \setlength{\tabcolsep}{3pt}
        \scriptsize
        \begin{tabular}{c|cr}
        \multirow{2}{*}{$q_1$} &
            ~M: & $\phantom{\ltlF r \land }\ltlGF (y \land \ltlX y)~$ \\
          &  ~B: & $-~$
        \end{tabular}
    };    
    \node[above=0.1cm of semLab1, box, inner sep=-1pt, fill=autState_color!30]  (semLab0) {
        \setlength{\tabcolsep}{3pt}
        \scriptsize
        \begin{tabular}{c|cr}
        \multirow{2}{*}{$q_0$} &
            ~M: &  $\ltlF r \land \ltlGF (y \land \ltlX y)~$ \\
           &  ~B: & $-~$
        \end{tabular}
    };
    \node[below=0.1 of semLab1, box, inner sep=-1pt, fill=autState_color3!30] (semLab2) {

        \setlength{\tabcolsep}{3pt}
        \scriptsize
        \begin{tabular}{c|cr}
        \multirow{2}{*}{$q_2$} &
           ~M: &  $\phantom{\ltlF r \land }\ltlGF (y \land \ltlX y)~$ \\
          &   ~B: & $\ltlF (y \land \ltlX y)~$
        \end{tabular}
    };  
    \node[below=0.1cm of semLab2, box, inner sep=-1pt, fill=autState_color4!30] (semLab3) {
        \setlength{\tabcolsep}{3pt}
        \scriptsize
        \begin{tabular}{c|cr}
        \multirow{2}{*}{$q_3$} &
            ~M: &  $\phantom{\ltlF r \land }\ltlGF (y \land \ltlX y)~$ \\
           &  ~B: & $y \lor \ltlF (y \land \ltlX y)~$
        \end{tabular}
    };
    \node[below=0.1cm of semLab3, box, inner sep=-1pt, fill=autState_color5!30] (semLab4) {
        \setlength{\tabcolsep}{3pt}
        \scriptsize
        \begin{tabular}{c|cr}
        \multirow{2}{*}{$q_4$} &
            ~M: &  $\phantom{\ltlF r \land }\ltlGF (y \land \ltlX y)~$ \\
           &  ~B: & $\ttrue~$
        \end{tabular}
    };

    \draw[->, dashed, gray!80, shorten >=2pt, shorten <=2pt] (q0.north east) to[bend left=30] ($(semLab0.west)$);
    \draw[->, dashed, gray!80, shorten >=2pt, shorten <=2pt] (q3.south east) to[bend right=15] ($(semLab3.west)$);
    \draw[->, dashed, gray!80, shorten >=2pt, shorten <=2pt] (q1.south east) to[bend right=25] ($(semLab1.west)$);
    \draw[->, dashed, gray!80, shorten >=2pt, shorten <=2pt] (q2.south east) to[bend right=15] ($(semLab2.west)$);
    \draw[->, dashed, gray!80, shorten >=2pt, shorten <=2pt] (q4.south east) to[bend right=15] ($(semLab4.west)$);
\end{tikzpicture}

    \caption{Semantically labelled automaton for the formula $\ltlF r \land \ltlGF(y \land \ltlX y)$.}
    \label{fig:breakpoint_aut}
\end{figure}

\section {List of All Features} \label{App:features}
We present variants of trueness that arise through different normalisations and the missing formula complexity features.
\subsection{Trueness variants}
We compute a trueness feature value for every MDP label of $\mdplabel$.
Normalizing in different ways and over different subsets yields different semantic meaning of the feature

\paragraph{Min-Max Normalisation.}
We compute the min and max value across all letters and normalise a feature value $x$ to the [0,1] interval using:
\[
\hat{x} = \frac{x - min}{max-min}
\]
This yields a relative ranking of letters, where higher values are preferred.
If every letter looks bad right now (i.e.\ all trueness feature values are negative) this normalisation at least allows the agent to identify which letter is least bad and attempt it which most of the time is still better than doing nothing.

\paragraph{Extreme value normalisation.}
We again compute min and max values across all letters and isolate them by setting every other value to 0.
\[
 \hat{x} = \begin{cases}
     x & \text{if}~~ x=min ~~\textbf{or}~~x=max \\
     0 & \text{else}
 \end{cases}
\]
This spoonfeeds the best and worst option to the agent.
The idea is to have a strong signal early in training that suffices for the simple formulae of early curriculum stages and later have the agent learn when to deviate and take more nuanced information into account.

\paragraph{Reach-Avoid normalisation.}
We again compute min and max values across all letters.
However, we now scale positive trueness values (letters that we want to ``reach") and negative trueness values (letters to ``avoid") separateny to the [0,1] and the [-1,0] interval respectively.
\[
 \hat{x} = \begin{cases}
     x/max & \text{if}~~ x> 0 \\
     x/min & \text{else}
 \end{cases}
\]
The idea is that the large negative value of an immediate safety violation could overshadow a desirable letter that achieves a subgoal.
While highlighting the safety property seems somewhat desirable, this is taken care of by different variants already.
In this variant, we want to highlight letters that create immegiate progress and rank them by how much progress they create.

\subsection{Formula Complexity Measures}
When the agent resolves the non-determinism of the LDBA, i.e.\ selects one of the possibly plenty-availible epsilon transitions, it requires information about the formulae of the subsequent states in the accepting component of the LDBA.
While the previously described features yield information of which letters to reach and avoid in these formulae, they yield no indication about the complexity of the remaining task.
E.g. when the agent can chose between an $\varepsilon$-action leading to $\ltlG a$ and another $\varepsilon$-action leading to $\ltlGF a \land \ltlGF b \land \ltlG \neg c$, we'd prefer to chose the former, as satisfying it is much easier (just, from looking at the task alone; the feasibility in the MDP is a different story that requires a separate judgement).

For that matter, we take inspiration from \cite{DBLP:conf/cav/KretinskyMPR23} and introduce the following measures of formula complexity.

\paragraph{Syntax tree height.}
Intuitively, a recurrence formula with a long sequence of tasks like $\ltlGF(a \land \neg b \ltlU (c \land \ldots))$ is more complex than a simple recurrence task like $\ltlGF(a) \land \ltlGF(b)$.
We detect this difference in complexity by measuring the height of their syntax trees.
To obtain a well-behaved feature value in the same order of magnitude as other features, we estimate a maximum height (the height of the initial formula) and use it to normalise all feature values.

\paragraph{Conjuncts/Disjuncts.}
Additionally, we want to measure the breadth of the syntax tree, which is mostly driven by con- and disjunctions.
We distinguish broadness due to conjunctions or disjunctions, as they indicate opposing properties.
A high number of conjunctions is restrictive (and thus likely to be more difficult to satisfy) whereas a high number of disjunctions is permissive (and thus likely to be easier to satisfy).
We employ a similar normalisation as for syntax tree height, by estimating a maximum from the initial formula.

\paragraph{Plain Trueness.}
Without computing the difference to its predecessor and without any normalisations, trueness can also act as a measure of formula complexity and formula difficulty.

\section{Training Details}\label{app:training}
In this section, we provide further technical details on our training procedure.
Instead of directly optimising the theoretical objective in Equation~\ref{eq:vanilla-automata-theory-single-rlk} by giving rewards of $1$ for visits to accepting states and $0$ otherwise, we furthermore penalise the policy with a reward of $-1$ for reaching a sink state in the automaton.
This is important in practice, as it ensures that the policy can not only learn from successful episodes, but also improve from unsuccessful trials.
We also choose a fixed discount factor $\gamma$ as is common in RL, instead of employing advanced discounting techniques that yield theoretically optimal policies, such as eventual discounting~\cite{DBLP:conf/icml/Voloshin0Y23}.
Consistent with previous work~\cite{DBLP:conf/iclr/JackermeierA25}, we find that standard discounting provides a better training signal in practice.
Finally, we terminate episodes in which a guarantee formula has been sampled early upon reaching an accepting state (since the trajectory then satisfies the formula by definition).
Since this introduces different scales of returns for different types of formulae (the agent can achieve returns greater than 1 only for non-guarantee formulae), we employ reward normalisation~\cite{DBLP:conf/nips/HasseltGHMS16} to stabilise training.

\paragraph{Curriculum learning.}
We use \textit{curriculum learning} \cite{DBLP:journals/jmlr/NarvekarPLSTS20}, which has previously been shown to improve training of LTL-conditioned multi-task RL policies~\cite{DBLP:conf/iclr/JackermeierA25}.
A training curriculum consists of different stages, where each stage induces a distribution over LTL formulae sampled during training.
Once the agent performs sufficiently well on tasks sampled from the current stage (measured by the achieved return over the last $T$ episodes), we move on to the next curriculum stage.
The purpose of this scheme is to gradually expose the agent to more difficult tasks as its capabilities improve.

\section{Experimental Details} \label{app:experiments}

\subsection{Environments}\label{app:envs}
\begin{figure}[t]
    \centering
    \begin{subfigure}{0.32\columnwidth}
        \centering
        \frame{\includegraphics[width=\textwidth]{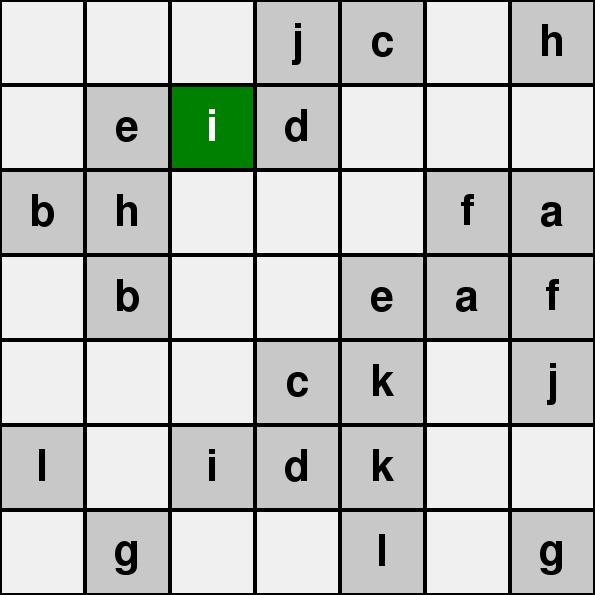}}
        \caption{LetterWorld}
        \label{fig:letter}
    \end{subfigure}
    \hspace{0.5cm}
    \begin{subfigure}{0.45\columnwidth}
        \centering
        \includegraphics[width=\textwidth]{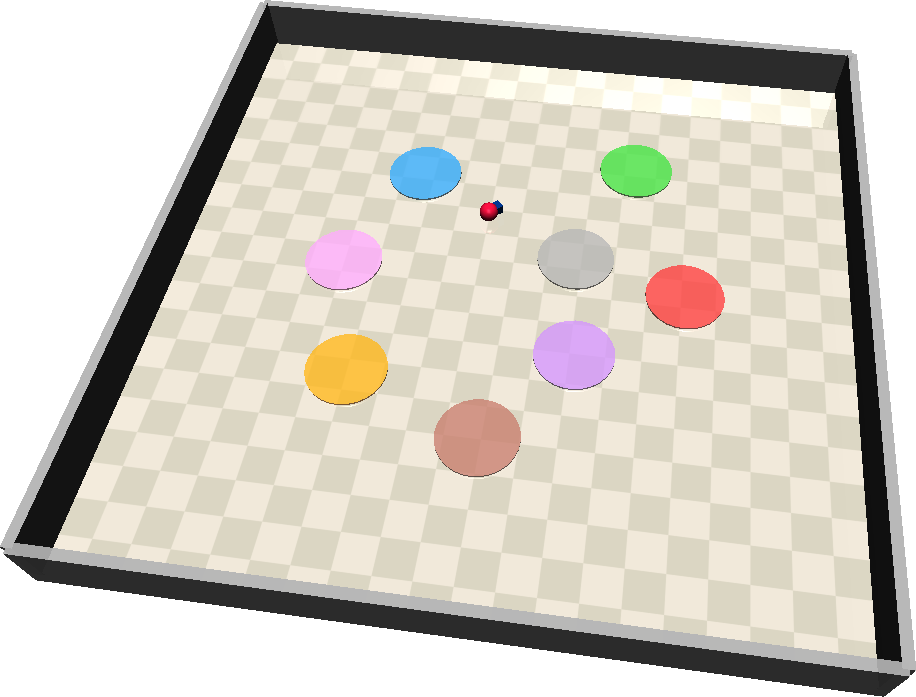}
        \vspace{-.5cm}
        \caption{ZoneEnv}
        \label{fig:zones}
    \end{subfigure}
    \caption{Environment visualisations.}
    \label{fig:envs}
\end{figure}

\begin{table*}[t]
\centering
\resizebox{0.9\textwidth}{!}{%
\begin{tabular}{llcccccc}
\toprule
\multirow{2}{*}{Category} & \multirow{2}{*}{Hyperparameter} & \multicolumn{3}{c}{LetterWorld} & \multicolumn{3}{c}{ZoneEnv} \\
\cmidrule(lr){3-5} \cmidrule(lr){6-8}
 &  & \textsc{LTL2Action} & \textsc{DeepLTL} & \textsc{SemLTL} & \textsc{LTL2Action} & \textsc{DeepLTL} & \textsc{SemLTL} \\
\midrule
\multirow{14}{*}{PPO} 
& Total timesteps & \multicolumn{3}{c}{--- $2 \times 10^7$ ---} & \multicolumn{3}{c}{--- $3 \times 10^8$ ---} \\
& Environments & \multicolumn{3}{c}{--- 16 ---} & \multicolumn{3}{c}{--- 128 ---} \\
& Steps/update & \multicolumn{3}{c}{--- 128 ---} & \multicolumn{3}{c}{--- 1024 ---} \\
& Minibatches & \multicolumn{3}{c}{--- 8 ---} & \multicolumn{3}{c}{--- 4 ---} \\
& Update epochs & \multicolumn{3}{c}{--- 8 ---} & \multicolumn{3}{c}{--- 5 ---} \\
& Discount ($\gamma$) & \multicolumn{3}{c}{--- 0.94 ---} & \multicolumn{3}{c}{--- 0.998 ---} \\
& GAE lambda ($\lambda$) & \multicolumn{6}{c}{--- 0.95 ---} \\
& Clip epsilon & \multicolumn{6}{c}{--- 0.2 ---} \\
& Entropy coef. & 0.01 & 0.05 & 0.01 & \multicolumn{3}{c}{--- 0.003 ---} \\
& Value func.\ coef. & \multicolumn{6}{c}{--- 0.5 ---} \\
& Learning rate & \multicolumn{3}{c}{--- $3 \times 10^{-4}$ ---} & \multicolumn{3}{c}{--- $5 \times 10^{-4}$ ---} \\
& Max grad norm & \multicolumn{3}{c}{--- 0.5 ---} & \multicolumn{3}{c}{--- 1.0 ---} \\
& Adam epsilon & \multicolumn{6}{c}{--- $10^{-8}$ ---} \\
\midrule
Curriculum & Episode window & 256 & 512 & 256 & \multicolumn{3}{c}{--- 128 ---} \\
\midrule
\multirow{5}{*}{State Enc.} 
& Channels & \multicolumn{3}{c}{--- [16, 32, 64] ---} & \multicolumn{3}{c}{--- [16, 32] ---} \\
& Kernel size & \multicolumn{3}{c}{--- [2, 2] ---} & \multicolumn{3}{c}{--- 5 ---} \\
& Sensor MLP out size & \multicolumn{3}{c}{--} & \multicolumn{3}{c}{--- 32 ---} \\
& Fusion MLP out size & \multicolumn{3}{c}{--} & \multicolumn{3}{c}{--- 64 ---} \\
& Activation & \multicolumn{6}{c}{--- ReLU ---} \\
\midrule
\multirow{3}{*}{Actor} 
& Hidden sizes & \multicolumn{6}{c}{--- [64, 64, 64] ---} \\
& Activation & \multicolumn{6}{c}{--- ReLU ---} \\
\midrule
\multirow{2}{*}{Critic} 
& Hidden sizes & \multicolumn{6}{c}{--- [64, 64] ---} \\
& Activation & \multicolumn{6}{c}{--- Tanh ---} \\
\midrule
\midrule
\multirow{2}{*}{Env} 
& Episode length (train) & \multicolumn{3}{c}{--- 75 ---} & \multicolumn{3}{c}{--- 1,000 ---} \\
& Episode length (test) & \multicolumn{3}{c}{--- 75 ---} & \multicolumn{3}{c}{--- 5,000 ---} \\

\bottomrule
\end{tabular}
}
\caption{Hyperparameters for \textsc{SemLTL} and the baselines across \textit{LetterWorld} and \textit{ZoneEnv} environments. Values spanning multiple columns indicate identical settings across those methods/environments.}
\label{tab:hyperparameters}
\end{table*}

\paragraph{LetterWorld.}
This environment was originally introduced by \citeauthor{DBLP:conf/icml/VaezipoorLIM21}~(\citeyear{DBLP:conf/icml/VaezipoorLIM21}) and forms a standard benchmark for LTL-conditioned RL agents. It consists of a $7\times7$ grid with 12 letters ($a$ through $l$), which are randomly placed at the beginning of each episode. Each letter appears twice in the grid, and there is always a direct path between two letters without touching any other letters. The agent can move along the cardinal directions, and the grid wraps around: if the agent goes out of bounds on one side, it is moved to the other side of the grid. The agent observes the entire grid from an egocentric perspective. Concretely, an observation is given by a tensor $o$ of shape $7\times7\times13$, where $o[x,y,z] = 1$ if the $z$th letter appears at position $(x,y)$ relative to the agent's position, and $0$ otherwise. The last channel (the $13$th position in the tensor) is zero everywhere, except for a $1$ in the centre (corresponding to the egocentric agent position). See Figure~\ref{fig:letter} for a visualisation of the environment.

\paragraph{ZoneEnv.}
The \textit{ZoneEnv} environment was also originally introduced by \citeauthor{DBLP:conf/icml/VaezipoorLIM21}~(\citeyear{DBLP:conf/icml/VaezipoorLIM21}) and has been investigated in many subsequent works \cite{DBLP:conf/nips/0002M023,DBLP:conf/iclr/JackermeierA25,DBLP:journals/corr/abs-2508-01561}.
It consists of a walled plane with 8 circular regions (i.e.\ zones) of different colours corresponding to the atomic propositions.
It contrast to the original environment, in which there are 4 colours and 2 zones per colour, we consider a more difficult version with 8 colours and only a single zone per colour.
The agent receives information about the zones via an egocentric lidar sensor.
Specifically, the 360-degree field of view around the agent is divided into 32 evenly spaced bins, and for each bin, the agent receives RGB colour and distance information for the closest zone in that direction, and a binary indicator if a zone has been detected at all.
This yields lidar observations of shape $32\times 5$.
Furthermore, the observations contain physical properties of the agent such as its acceleration, velocity, and angular velocity.
The action space is continuous and 2-dimensional, corresponding to acceleration and angular velocity.
Zone and agent positions are randomly sampled at the beginning of each episode.
See Figure~\ref{fig:zones} for a visualisation of the environment.

\subsection{Hyperparameters}
Table~\ref{tab:hyperparameters} lists hyperparameters of \textsc{SemLTL} and the baselines, including PPO parameters, curriculum window size, and details on the neural network architectures used for processing environment observations, and for the actor and critic.

\paragraph{Model architectures.}
The methods we consider all follow a similar overall model architecture:
MDP states are first encoded via a \textit{state encoder}, which is implemented as a CNN for LetterWorld and for the RGB lidar observations in ZoneEnv.
Additionally, the agent's sensor information in ZoneEnv is encoded via a single-layer MLP, and the encoded sensor information is then passed through a single-layer fusion MLP together with the encoded lidar information.
At the same time, the LTL task is encoded in a method-specific way (see below), and the resulting embedding is concatenated with the state encoding, yielding a latent representation $z$.
This is finally passed to an actor MLP that produces the parameters of the action distribution, and a critic MLP yielding a scalar value estimate.
For LetterWorld, the actor outputs the parameters of a categorical distribution, whereas it outputs the mean and standard deviation of a Gaussian distribution in the continuous case of ZoneEnv.
Additionally, \textsc{SemLTL} handles $\varepsilon$-actions as described in the paper, whereas \textsc{DeepLTL} outputs a single Bernoulli parameter $p$, which indicates whether the $\varepsilon$-action in the chosen optimal reach-avoid sequence should be taken.

\paragraph{LTL encoding networks.}
\textsc{SemLTL} encodes the current task information via the semantic embedding of the LDBA state.
The semantic embeddings are of size $762$ in LetterWorld, and $138$ in ZoneEnv.
Before passing them to the actor MLP, they are first projected to size $64$ (in LetterWorld) and $32$ (in ZoneEnv) by a simple linear layer.
For the baselines, we use the officially reported architectures and hyperparameters. 
In the case of \textsc{LTL2Action}, this is a relational graph convolutional network~\cite{DBLP:conf/esws/SchlichtkrullKB18} applied to the syntax tree of the LTL formula.
It consists of $8$ layers with shared weights and $32$-dimensional outputs.
Each layer receives the output of the previous layer concatenated with the initial one-hot node encodings.
\textsc{DeepLTL} instead encodes reach-avoid sequences using $32$-dimensional assignment embeddings, a DeepSets module~\cite{DBLP:conf/nips/ZaheerKRPSS17} with two (in LetterWorld) or one (in ZoneEnv) layers with output size $32$, and a gated recurrent unit~\cite{DBLP:conf/emnlp/ChoMGBBSB14} with hidden size $64$.

\paragraph{Other hyperparameters.}
See Table~\ref{tab:hyperparameters} for maximum episode lengths used during training and evaluation.
We use the Adam optimiser~\cite{DBLP:journals/corr/KingmaB14} throughout.

\paragraph{Training curricula.}
Our training curricula are designed to gradually expose the agent to more challenging tasks. For example, the agent is unlikely to be able to solve the task $\ltlF (\textit{red}\land\ltlF\textit{green})$ in ZoneEnv in the beginning of training, since it initially explores the environment completely randomly.
Only once the agent successfully solves single-step reachability tasks, it is feasible for it to learn to solve more complicated specifications.
We broadly follow the training curricula proposed by \citeauthor{DBLP:conf/iclr/JackermeierA25}~(\citeyear{DBLP:conf/iclr/JackermeierA25}).
However, their curricula are defined over reach-avoid sequences, whereas we directly sample LTL specifications.

The LetterWorld curriculum consists of 4 stages.
In the first stage, we sample single-step reachability tasks of the form $\ltlF a$ and reach-avoid tasks of the form $\neg a\ltlU b$.
Once the agent achieves an average success rate of 90\% across $256$ training episodes, we move to the second stage, with the same formulae substituted with disjunctions of up to length 2 (i.e.\ $\ltlF (a \lor b)$ or $\neg(a\lor b)\ltlU c$).
The third curriculum stage is achieved with a success rate of $95\%$ and consists of the same tasks as the previous stage, but always with length $2$.
Finally, the fourth curriculum stage introduces reach/avoid formulae with length and disjunctions of size up to $3$.
Further, it introduces recurrence tasks of the form $\bigwedge_{i=1}^k \ltlGF a_i \land \ltlG\left( \neg\left( \bigvee_{i=1}^l b_i \right) \right)$, where $2\leq k\leq 4$ and $0\leq l\leq 2$.

ZoneEnv follows a similar setup: the first stage consists of single-step reach tasks, which turn into two-step formulae of the form $\ltlF (a_1\land \ltlF a_2)$ in stage 2 (after achieving 90\% success rate).
Stage 3 and 4 introduce reach-avoid specifications of length $1$ and $2$, respectively.
Finally, the fifth curriculum stage consists of a mix of the above tasks with a maximum depth and disjunction size of $2$.
The final curriculum stage also introduces recurrence tasks, as in LetterWorld, and persistence tasks of the form $\ltlFG a\land \ltlG\left( \neg\left( \bigvee_{i=1}^l b_i \right) \right)$, where $0\leq l\leq 2$.

For \textsc{DeepLTL}, we use the curricula from the paper.
\textsc{LTL2Action} was originally proposed without curriculum training, but we experimented with various curriculum choices for a fair comparison.
In LetterWorld, we did not find curriculum learning to yield benefits for \textsc{LTL2Action} (in line with \cite{DBLP:conf/iclr/JackermeierA25}) and hence sample tasks from a simple distribution of the reach-avoid family with maximum depth of $3$ (i.e.\ $\neg a\ltlU (b \land (\neg c \ltlU (d \land (\neg e \ltlU f))))$.
In ZoneEnv, we follow the same curriculum as we use for $\textsc{SemLTL}$, without sampling infinite-horizon tasks (since they are not supported by \textsc{LTL2Action}).

\subsection{Literature Tasks and Results}\label{app:lit_tasks}
\begin{table*}[t]
\centering
\resizebox{0.85\textwidth}{!}{ 
\begin{tabular}{lll rr r r rr}
\toprule
\multirow{2}{*}{} & \multirow{2}{*}{} & \multirow{2}{*}{Task Specification Type} & \multirow{2}{*}{$|\mathcal Q|$} & \multirow{2}{*}{$|\delta|$} & \textsc{LTL2Action} & \textsc{DeepLTL} & \multicolumn{2}{c}{\textsc{\textbf{SemLTL}} \textbf{(Ours)}} \\
\cmidrule(lr){6-6} \cmidrule(lr){7-7} \cmidrule(lr){8-9}
& & & & & SR / $\mu_\text{acc}$ & SR / $\mu_\text{acc}$ & SR / $\mu_\text{acc}$ & $\mu_\text{states}$ \\
\midrule

\multirow{10}{*}{\rotatebox{90}{Finite-horizon}} 
& \multirow{4}{*}{\rotatebox{90}{Letter}}
& $\neg \textit{a} \ltlU (\textit{b} \land (\neg \textit{c} \ltlU (\textit{d} \land (\neg \textit{e} \ltlU \textit{f}))))$ & $6$ & $15$
& $0.73 \scriptstyle{\pm 0.20}$ 
& $\textbf{0.95} \scriptstyle{\pm 0.02}$ 
& $\textbf{0.95} \scriptstyle{\pm 0.01}$  & $4.01 \scriptstyle{\pm 0.00}$ \\

& & $(\ltlF ((\textit{a} \lor \textit{c} \lor \textit{j}) \land \ltlF \textit{b})) \land (\ltlF (\textit{c} \land \ltlF \textit{d})) \land \ltlF \textit{k}$ & $5$ & $10$
& $0.48 \scriptstyle{\pm 0.20}$ 
& $\textbf{1.00} \scriptstyle{\pm 0.00}$ 
& $\textbf{1.00} \scriptstyle{\pm 0.00}$ & $3.04 \scriptstyle{\pm 0.01}$ \\

& & $(\ltlF \textit{d}) \land (\neg\textit{f} \ltlU (\textit{d} \land \ltlF \textit{b}))$ & $14$ & $39$ 
& $0.84 \scriptstyle{\pm 0.06}$
& $0.95 \scriptstyle{\pm 0.02}$
& $\textbf{0.96} \scriptstyle{\pm 0.01}$ & $5.23 \scriptstyle{\pm 0.07}$ \\

& & $\ltlF (\textit{a} \land (\neg\textit{b} \ltlU \textit{c})) \land \ltlF \textit{d}$ & $9$ & $29$
& $0.62 \scriptstyle{\pm 0.15}$ 
& $\textbf{1.00} \scriptstyle{\pm 0.00}$ 
& $\textbf{1.00} \scriptstyle{\pm 0.00}$ & $4.20 \scriptstyle{\pm 0.03}$ \\

\cmidrule{2-9}

& \multirow{6}{*}{\rotatebox{90}{Zones}}
& $\neg(\textit{purple} \lor \textit{orange}) \ltlU (\textit{red} \land \ltlF \textit{gray})$ & $6$ & $15$
& $0.38 \scriptstyle{\pm 0.09}$ 
& $\textbf{0.85} \scriptstyle{\pm 0.04}$ 
& $\textbf{0.85} \scriptstyle{\pm 0.02}$ & $4.01 \scriptstyle{\pm 0.01}$ \\

& & $\neg \textit{gray} \ltlU ((\textit{red} \lor \textit{purple}) \land (\neg \textit{orange} \ltlU \textit{brown}))$ & $6$ & $12$
& $0.56 \scriptstyle{\pm 0.21}$ 
& $\textbf{0.91} \scriptstyle{\pm 0.02}$
& $\textbf{0.91} \scriptstyle{\pm 0.06}$ & $4.00 \scriptstyle{\pm 0.00}$ \\

& & $((\textit{green} \lor \textit{red}) \rightarrow (\neg\textit{orange} \ltlU \textit{purple})) \ltlU \textit{orange}$ & $8$ & $20$
& $0.85 \scriptstyle{\pm 0.16}$
& $\textbf{0.97} \scriptstyle{\pm 0.01}$ 
& $0.96 \scriptstyle{\pm 0.01}$ & $4.00 \scriptstyle{\pm 0.00}$ \\

& & $(\ltlF \textit{red}) \land (\neg \textit{red} \ltlU (\textit{green} \land \ltlF \textit{orange}))$ & $5$ & $10$
& $0.15 \scriptstyle{\pm 0.06}$ 
& $\textbf{0.93} \scriptstyle{\pm 0.01}$ 
& $\textbf{0.93} \scriptstyle{\pm 0.03}$  & $3.19 \scriptstyle{\pm 0.04}$ \\

& & $\ltlF (\textit{blue} \land (\neg \textit{orange} \ltlU \textit{gray})) \land \ltlF \textit{purple}$ & $5$ & $12$
& $0.53 \scriptstyle{\pm 0.20}$ 
& $0.94 \scriptstyle{\pm 0.03}$
& $\textbf{0.95} \scriptstyle{\pm 0.02}$ & $3.10 \scriptstyle{\pm 0.04}$ \\

& & $\ltlF (\textit{pink} \lor \textit{green}) \land \ltlF \textit{brown} \land \ltlF \textit{purple}$ & $4$ & $8$
& $0.61 \scriptstyle{\pm 0.23}$ 
& $\textbf{0.96} \scriptstyle{\pm 0.01}$ 
& $0.90 \scriptstyle{\pm 0.07}$  & $2.11 \scriptstyle{\pm 0.04}$ \\

\midrule
\addlinespace[-0.02ex]
\midrule

\multirow{4}{*}{\rotatebox{90}{Infinite}} 

& \multirow{2}{*}{\rotatebox{90}{L}}
& $\ltlGF (\textit{a} \land \ltlF \textit{b}) \lor \ltlGF (\textit{c} \land \ltlF \textit{d}) \land \ltlGF (\textit{e} \land \ltlF \textit{f})$ & $7$ & $15$
& n/a 
& $ 9.72 \scriptstyle{\pm 3.81}$ 
& $ \textbf{11.67} \scriptstyle{\pm 1.02}$ & $4.08 \scriptstyle{\pm 0.16}$ \\

& &  $\ltlGF \textit{a} \land \ltlGF \textit{b} \land \ltlGF \textit{c} \land \ltlGF \textit{d} \land \ltlG ( \neg \textit{e} \land \neg \textit{f}) $ & $6$ & $16$
& n/a 
& $2.57 \scriptstyle{\pm 0.31}$
& $2.58 \scriptstyle{\pm 0.18}$ & $5.44 \scriptstyle{\pm 0.05}$ \\
\cmidrule{2-9}
& \multirow{2}{*}{\rotatebox{90}{Z}}
&  $\ltlGF \textit{orange} \land \ltlGF \textit{gray}$ & $3$ & $6$
& n/a 
& $17.66\scriptstyle{\pm 8.29}$
& $\textbf{24.10}\scriptstyle{\pm11.74}$ & $3.00 \scriptstyle{\pm 0.00}$ \\

& & $\ltlGF \textit{blue} \land \ltlGF \textit{green} \land \ltlGF \textit{orange} \land \ltlG\neg\textit{pink}$ & $5$ & $13$
& n/a 
& $11.33 \scriptstyle{\pm5.26}$
& $\textbf{13.19}\scriptstyle{\pm 0.53}$  & $4.09 \scriptstyle{\pm 0.05}$  \\

\bottomrule
\end{tabular}
}
\caption{Evaluation results of \textsc{SemLTL} and baselines on tasks from the literature.
We again measure success rate (SR) for finite tasks and number of visits to accepting states ($\mu_\text{acc}$) for infinite tasks.
For task complexity, we again report $|\mathcal Q|$ (number of LDBA states), and $|\delta|$ (number of transitions) as well as the constructed LDBA states ($\mu_\text{states}$) for \textsc{SemLTL}\@.
Results are averaged over 5 seeds and 500 episodes per seed.
}
\label{tab:litresults}

\end{table*}
We consider literature tasks from \cite{DBLP:conf/iclr/JackermeierA25}.
Specifically, we use the finite tasks $\varphi_1,\ldots,\varphi_4$ for LetterWorld and $\varphi_5,\ldots\varphi_{11}$ for ZoneEnv.
Further, we use the infinite tasks $\psi_1,\ldots,\psi_2$ for LetterWorld and $\psi_3,\ldots,\psi_4$ for ZoneEnv.
The results are summarised in Table \ref{tab:litresults}, which follows the same structure as the main body table.

\begin{table*}[t]
\centering
\resizebox{0.85\textwidth}{!}{ 
\begin{tabular}{lll rr r r rr}
\toprule
\multirow{2}{*}{} & \multirow{2}{*}{} & \multirow{2}{*}{Task Specification Type} & \multirow{2}{*}{$|\mathcal Q|$} & \multirow{2}{*}{$|\delta|$} & \textsc{LTL2Action} & \textsc{DeepLTL} & \multicolumn{2}{c}{\textsc{\textbf{SemLTL}} \textbf{(Ours)}} \\
\cmidrule(lr){6-6} \cmidrule(lr){7-7} \cmidrule(lr){8-9}
& & & & & SR / $\mu_\text{acc}$ & SR / $\mu_\text{acc}$ & SR / $\mu_\text{acc}$ & $\mu_\text{states}$ \\
\midrule

\multirow{16}{*}{\rotatebox{90}{Finite-horizon}} 
& \multirow{8}{*}{\rotatebox{90}{Letter}}
& \textsc{Small} & $9$ & $23$
& $0.67 \scriptstyle{\pm 0.12}$ 
& $\mathbf{0.98} \scriptstyle{\pm 0.01}$ 
& $\mathbf{0.98} \scriptstyle{\pm 0.01}$ & $4.12 \scriptstyle{\pm 0.02}$ \\

& & \textsc{Local-Safety}[$3,2$] & $39$ & $172$
& $0.59 \scriptstyle{\pm 0.07}$ 
& $\mathbf{0.99} \scriptstyle{\pm 0.00}$ 
& $\mathbf{0.99} \scriptstyle{\pm 0.00}$ & $4.92 \scriptstyle{\pm 0.05}$ \\

& & \textsc{Local-Safety}[$3,3$] & $233$ & $1,372$
& $0.57 \scriptstyle{\pm 0.07}$ 
& \timeout
& $\mathbf{0.99} \scriptstyle{\pm 0.00}$ & $5.38 \scriptstyle{\pm 0.00}$ \\

& & \textsc{Local-Safety}[$3,4$] & $753$ & $4,944$
& $0.42 \scriptstyle{\pm 0.05}$ 
& \timeout
& $\mathbf{0.99} \scriptstyle{\pm 0.00}$ & $5.62 \scriptstyle{\pm 0.05}$ \\

& & \textsc{Global-Safety}[$3,3$] & $27$ & $116$
& $0.19 \scriptstyle{\pm 0.07}$
& $0.92 \scriptstyle{\pm 0.01}$
& $\mathbf{0.93} \scriptstyle{\pm 0.01}$ & $4.63 \scriptstyle{\pm 0.05}$ \\

& & \textsc{Global-Safety}[$4,6$] & $1,072$ & $6,414$ 
& $0.12 \scriptstyle{\pm 0.03}$ 
& \timeout
& $\mathbf{0.91} \scriptstyle{\pm 0.01}$ & $6.77 \scriptstyle{\pm 0.04}$ \\

& & \textsc{Finite-Reactive}[$5,3$] & $56$ & $338$
& $0.86 \scriptstyle{\pm 0.04}$ 
& \timeout
& $\mathbf{1.00} \scriptstyle{\pm 0.00}$ & $2.50 \scriptstyle{\pm 0.03}$ \\

& & \textsc{Finite-Reactive}[$8,2$] & $300$ & $2,115$
& $0.74 \scriptstyle{\pm 0.04}$ 
& \timeout
& $\mathbf{1.00} \scriptstyle{\pm 0.00}$ & $2.80 \scriptstyle{\pm 0.10}$ \\

\cmidrule{2-9}

& \multirow{8}{*}{\rotatebox{90}{Zones}}
& \textsc{Small} & $9$ & $23$
& $0.52 \scriptstyle{\pm 0.07}$ 
& $\mathbf{0.93} \scriptstyle{\pm 0.02}$ 
& $0.92 \scriptstyle{\pm 0.02}$ & $3.4 \scriptstyle{\pm 0.01}$ \\

& & \textsc{Local-Safety}[$3,2$] & $36$ & $155$
& $0.42 \scriptstyle{\pm 0.15}$ 
& $\mathbf{0.90}\scriptstyle{\pm 0.03}$
& $0.86 \scriptstyle{\pm 0.03}$ & $4.86 \scriptstyle{\pm 0.21}$ \\

& & \textsc{Local-Safety}[$3,3$] & $83$ & $379$
& $0.66 \scriptstyle{\pm 0.11}$ 
& $\mathbf{0.98}\scriptstyle{\pm 0.01}$
& $0.93 \scriptstyle{\pm 0.06}$ & $5.09 \scriptstyle{\pm 0.14}$ \\

& & \textsc{Local-Safety}[$3,4$] & $328$ & $1,755$
& $0.44 \scriptstyle{\pm 0.07}$ 
& \timeout
& $\mathbf{0.86} \scriptstyle{\pm 0.01}$ & $5.22 \scriptstyle{\pm 0.19}$ \\

& & \textsc{Global-Safety}[$3,3$] & $18$ & $68$ 
& $0.38 \scriptstyle{\pm 0.06}$ 
& $\mathbf{0.87} \scriptstyle{\pm 0.05}$
& $0.84 \scriptstyle{\pm 0.04}$ & $4.52 \scriptstyle{\pm 0.06}$ \\

& & \textsc{Global-Safety}[$4,8$] & $1,162$ & $6,510$ 
&  $0.39 \scriptstyle{\pm 0.05}$
& \timeout
& $\mathbf{0.79} \scriptstyle{\pm 0.06}$ & $6.73 \scriptstyle{\pm 0.18}$ \\

& & \textsc{Finite-Reactive}[$5,3$] & $28$ & $136$
& $0.87 \scriptstyle{\pm 0.06}$ 
& \timeout
& $\mathbf{0.99} \scriptstyle{\pm 0.01}$ & $2.60 \scriptstyle{\pm 0.07}$ \\

& & \textsc{Finite-Reactive}[$8,2$] & $107$ & $619$
& $0.74 \scriptstyle{\pm 0.11}$ 
& \timeout
& $\mathbf{0.98} \scriptstyle{\pm 0.00}$ & $2.96 \scriptstyle{\pm 0.16}$ \\

\midrule
\addlinespace[-0.02ex]
\midrule

\multirow{14}{*}{\rotatebox{90}{Infinite-horizon}} 
& \multirow{6}{*}{\rotatebox{90}{Letter}}
& \textsc{Small} & $7$ & $16$
& n/a 
& $6.15 \scriptstyle{\pm 1.97}$
& $\mathbf{7.13} \scriptstyle{\pm 0.51}$ & $4.76 \scriptstyle{\pm 0.09}$ \\

& & \textsc{Complex-Patrol}[$3,5$] & $39$ & $194$
& n/a 
& $4.46 \scriptstyle{\pm 0.30}$ 
& $\mathbf{4.90} \scriptstyle{\pm 0.15}$ & $4.38 \scriptstyle{\pm 0.01}$  \\

& & \textsc{Complex-Patrol}[$5,5$] & $49$ & $227$
& n/a 
& $1.78 \scriptstyle{\pm 0.26}$ 
& $\mathbf{1.83} \scriptstyle{\pm 0.10}$ & $6.44 \scriptstyle{\pm 0.03}$  \\

& & \textsc{Always-Reactive}[$3,2$] & $57$ & $236$
& n/a 
& $5.28 \scriptstyle{\pm 0.27}$ 
& $\mathbf{5.60} \scriptstyle{\pm 0.16}$ & $4.81 \scriptstyle{\pm 0.03}$ \\

& & \textsc{Always-Reactive}[$5,1$] & $38$ & $204$
& n/a 
& \timeout
& $\mathbf{3.06} \scriptstyle{\pm 0.07}$ & $7.06 \scriptstyle{\pm 0.02}$ \\

& & \textsc{Always-Reactive}[$6,1$] & $71$ & $446$
& n/a 
& \timeout
& $\mathbf{2.46} \scriptstyle{\pm 0.10}$ & $8.09 \scriptstyle{\pm 0.05}$ \\

\cmidrule{2-9}

& \multirow{8}{*}{\rotatebox{90}{Zones}}
& \textsc{Small} & $4$ & $10$
& n/a 
& $14.50 \scriptstyle{\pm 6.71}$ 
& $\mathbf{18.65} \scriptstyle{\pm 6.02}$ & $3.59 \scriptstyle{\pm 0.04}$ \\

& & \textsc{Reach-Stay}[$4$] & $18$ & $52$
& n/a 
& $703 \scriptstyle{\pm 745}$
& $\mathbf{2,763} \scriptstyle{\pm 482}$ & $6.33 \scriptstyle{\pm 0.09}$ \\

& & \textsc{Reach-Stay}[$5$] & $34$ & $116$
& n/a 
& $914 \scriptstyle{\pm 862}$
& $\mathbf{2,503} \scriptstyle{\pm 497}$ & $7.29 \scriptstyle{\pm 0.15}$ \\

& & \textsc{Complex-Patrol}[$3,5$] & $35$ & $170$
& n/a 
& $5.27 \scriptstyle{\pm 2.75}$ 
& $\mathbf{9.17} \scriptstyle{\pm 1.61}$ & $4.28 \scriptstyle{\pm 0.04}$ \\

& & \textsc{Complex-Patrol}[$5,5$] & $45$ & $201$
& n/a 
& $2.13 \scriptstyle{\pm 1.45}$ 
& $\mathbf{2.44} \scriptstyle{\pm 0.44}$ & $6.33 \scriptstyle{\pm 0.10}$ \\

& & \textsc{Always-Reactive}[$3,2$] & $57$ & $236$
& n/a 
& $8.08 \scriptstyle{\pm 3.84}$ 
& $\mathbf{10.86} \scriptstyle{\pm 1.13}$ & $4.90 \scriptstyle{\pm 0.05}$ \\

& & \textsc{Always-Reactive}[$5,1$] & $38$ & $204$
& n/a 
& \timeout
& $\mathbf{3.73} \scriptstyle{\pm 0.80}$ & $7.00 \scriptstyle{\pm 0.00}$ \\

& & \textsc{Always-Reactive}[$6,1$] & $71$ & $446$
& n/a 
& \timeout
& $\mathbf{1.43} \scriptstyle{\pm 0.39}$ & $8.00 \scriptstyle{\pm 0.00}$ \\

\bottomrule
\end{tabular}
}
\caption{Full benchmark results.
For each task family, $|\mathcal Q|$ denotes the average number of LDBA states, and $|\delta|$ the average number of transitions.
We report success rate (SR) for finite tasks and number of visits to accepting states ($\mu_\text{acc}$) for infinite tasks, as well as number of constructed LDBA states ($\mu_\text{states}$) for \textsc{SemLTL}\@.
``\timeout''~denotes failure to produce a single action within a time limit of 600s.
Results are averaged over 5 seeds and 500 episodes per seed.}
\label{tab:full_results}
\end{table*}
\subsection{Full Results}
Table~\ref{tab:full_results} lists full benchmark results across different instantiations of the task families.

\subsection{Trajectory Visualisations}
Figure \ref{fig:trajs} illustrates example trajectories produced by \textsc{SemLTL} on the ZoneEnv environment.

\begin{figure*}[b!]
    \begin{subfigure}{\textwidth}
\includegraphics[width=\textwidth]{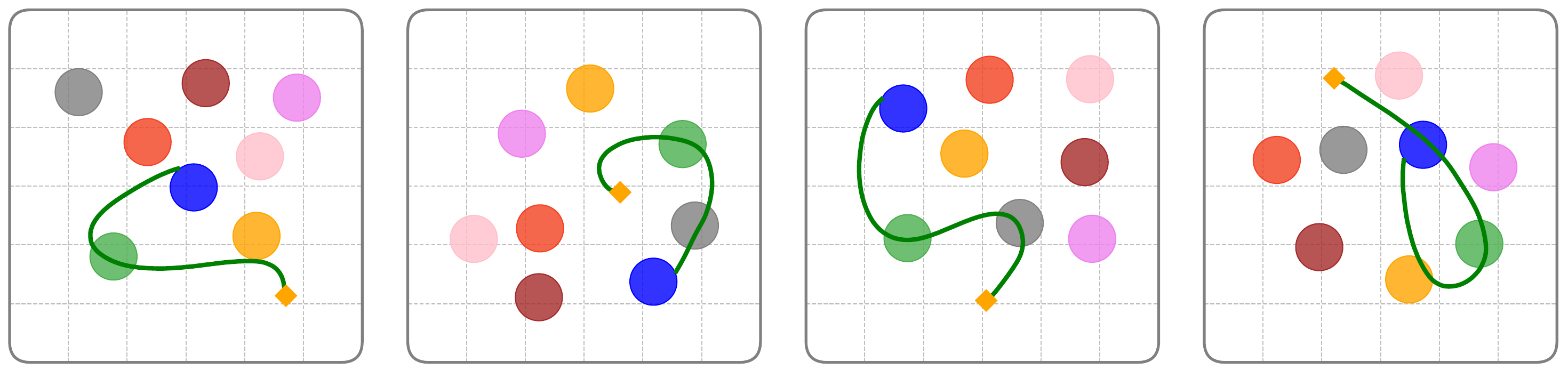}
\caption{$\ltlF (\textit{green} \land \ltlF \textit{blue})$}
\end{subfigure}

\bigskip

\begin{subfigure}{\textwidth}
\includegraphics[width=\textwidth]{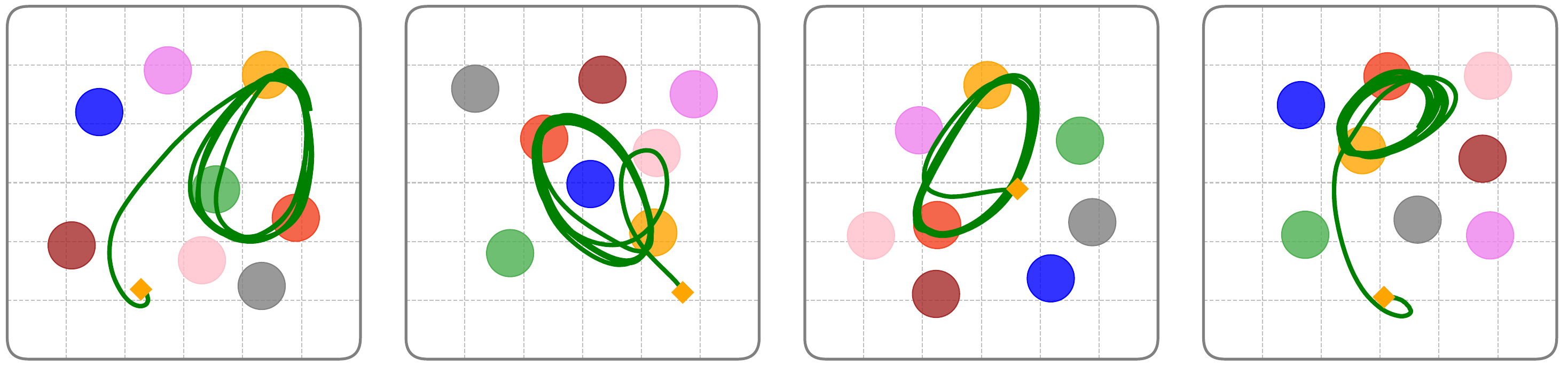}
\caption{$\ltlGF (\textit{red}) \land \ltlGF (\textit{yellow})$}
\end{subfigure}
\bigskip

\begin{subfigure}{\textwidth}
\includegraphics[width=\textwidth]{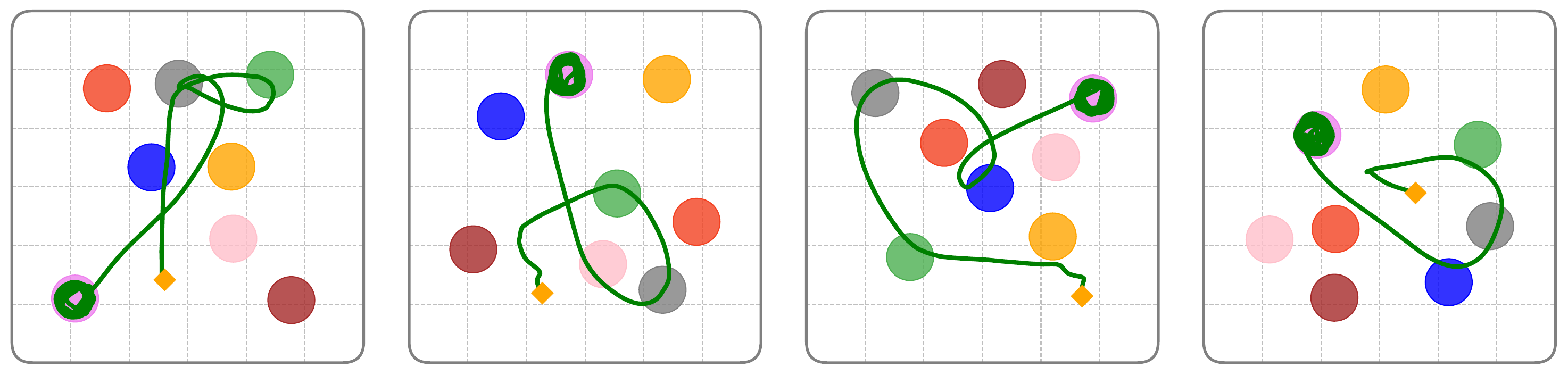}
\caption{$\textsc{Reach-Stay}[2]: \ltlF (\textit{green} \land \ltlF \textit{gray}) \land \ltlFG (\textit{purple})$}
\end{subfigure}
\bigskip

\begin{subfigure}{\textwidth}
\includegraphics[width=\textwidth]{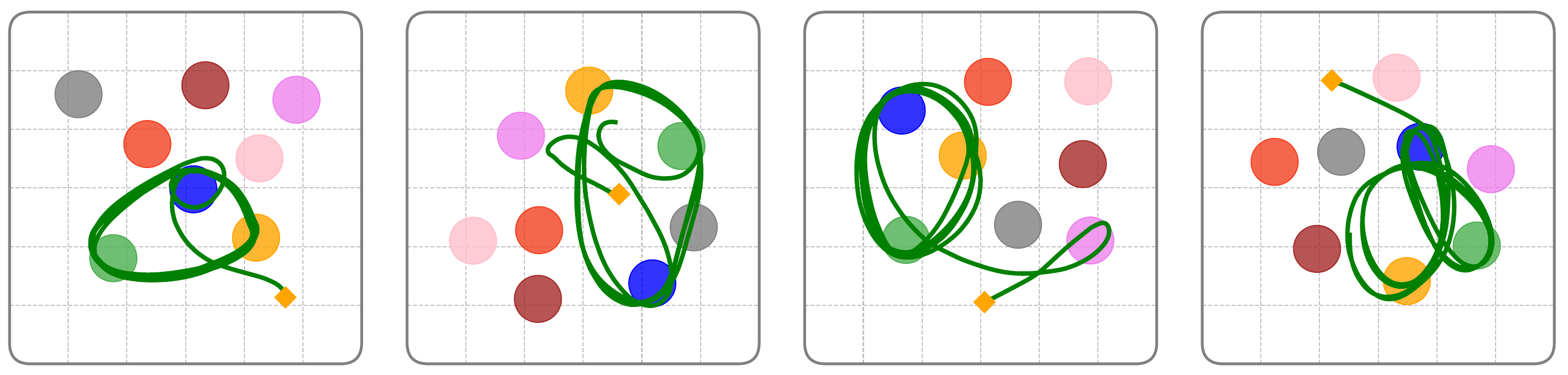}
\caption{\textsc{Always-React}[2,1]: $\ltlGF (\textit{blue}) \land \ltlG(\textit{blue} \rightarrow \ltlF \textit{yellow}) \land \ltlG(\textit{yellow} \rightarrow \ltlF \textit{green})$}
\end{subfigure}
\caption{Sample trajectories produced by \textsc{SemLTL} for the ZoneEnv environment.}
\label{fig:trajs}
\end{figure*}

\end{document}